\documentclass[11pt]{article}

\usepackage[preprint]{acl}

\usepackage{times}
\usepackage{latexsym}

\usepackage[T1]{fontenc}

\usepackage[utf8]{inputenc}

\usepackage{microtype}

\usepackage{inconsolata}

\usepackage[utf8]{inputenc} 
\usepackage[T1]{fontenc}    
\usepackage{hyperref}       
\usepackage{url}            
\usepackage{booktabs}       
\usepackage{amsfonts}       
\usepackage{nicefrac}       
\usepackage{microtype}      
\usepackage{multirow}
\usepackage{tabularx}
\usepackage[table]{xcolor}
\usepackage{graphicx}
\usepackage{enumitem}
\usepackage{amsmath}
\usepackage{amssymb}
\usepackage{amsthm}

\newtheorem{proposition}{Proposition}
\newtheorem{lemma}{Lemma}

\usepackage{algorithm}
\usepackage{algorithmic}

\usepackage[most]{tcolorbox}
\newtcolorbox{reformulationbox}{
  colback=gray!4,
  colframe=gray!45,
  boxrule=0.6pt,
  arc=2pt,
  left=6pt,
  right=6pt,
  top=6pt,
  bottom=6pt,
  before skip=8pt,
  after skip=8pt
}
\definecolor{thm}{RGB}{119, 228, 200}
\definecolor{lemma}{RGB}{69, 53, 193}
\newtcolorbox{thmbox}[1][]{colback=thm!5!white,colframe=thm!60!black,boxsep=-4pt,grow to left by=4pt,left=10pt,grow to right by=4pt,right=10pt,top=10pt,bottom=10pt,#1}
\newtcolorbox{lemmabox}[1][]{colback=lemma!5!white,colframe=lemma!60!black,boxsep=-4pt,grow to left by=4pt,left=10pt,grow to right by=4pt,right=10pt,top=10pt,bottom=10pt,#1}

\title{Revisiting Reinforcement Learning with Verifiable Rewards from a Contrastive Perspective}

\author{
	Feng Zhang$^{1,2}$ \quad
	Xinhong Ma$^{2}$ \quad
	Ziqiang Dong$^{2,\dagger}$ \quad
	Xi Leng$^{2,3}$ \\
	\textbf{Jianfei Zhao}$^{1,4}$ \quad
	\textbf{Xin Sun}$^{1,\dagger}$ \quad
	\textbf{Yang Yang}$^{2}$ \quad
	\textbf{Guanjun Jiang}$^{2}$ \\
	\vspace{-0.8em}
	\\
	$^{1}$Beijing Institute of Technology \quad
	$^{2}$Qwen Business Unit of Alibaba  \\
	$^{3}$The Chinese University of Hong Kong, Shenzhen \quad
	$^{4}$Zhongguancun Academy \\
	\texttt{\{bit\_zhangfeng, zhqingan, sunxin\}@bit.edu.cn} \\
	\texttt{\{xinhong.mxh, ziqiang.dzq, chris.yang, guanj.jianggj\}@alibaba-inc.com} \\
	\texttt{xileng@link.cuhk.edu.cn}
}

\begin{document}
	\maketitle
	
	\begingroup
	\renewcommand{\thefootnote}{\fnsymbol{footnote}}
	\footnotetext[2]{Corresponding authors.}
	\endgroup
	\begin{abstract}
		Group Relative Policy Optimization (GRPO) is one of the most widely adopted RLVR algorithms for post-training large language models on reasoning tasks.
		We first show that GRPO admits an equivalent discriminative reformulation, in which policy optimization maximizes the expected score gap between verified positive and negative rollouts.
		This reformulation reveals two objective-level limitations: \emph{likelihood-misaligned surrogate scores}, in which clipped ratio-based scores are optimized rather than the sequence likelihoods that govern generation, and \emph{score-insensitive credit assignment}, in which rollout-level credit does not reflect the current score gaps between positive and negative rollouts.
		To address these limitations, we propose ConSPO, a \textbf{Con}trastive \textbf{S}equence-level \textbf{P}olicy \textbf{O}ptimization method that uses length-normalized sequence log-probabilities as rollout scores and contrasts verified positive rollouts against negative distractors within the same group.
		ConSPO optimizes a group-wise InfoNCE-style objective to adaptively strengthen updates for poorly separated positives and high-scoring negatives, together with a curriculum-scheduled margin that preserves separation pressure as training progresses.
		Experiments across diverse settings show that ConSPO outperforms strong baselines on challenging reasoning benchmarks.
	\end{abstract}
	
	\section{Introduction}
	\begin{figure*}[t]
		\centering
		\begin{thmbox}[width=1.0\textwidth]
			\small
			\textbf{Equivalent discriminative formulation.}
			$$
			\mathcal J_0(\theta)
			=
			\mathbb E_q
			\left[
			\frac{1}{2}
			\mathbb E_{\bar{o}\sim \pi_{\rm old}(\cdot|q)}
			\big[
			|A(\bar{o}|q)|
			\big]
			\,
			\mathbb E_{o\sim\widetilde{\pi}_q^+,\,o'\sim\widetilde{\pi}_q^-}
			\big[
			s_\theta^+(o,q)-s_\theta^-(o',q)
			\big]
			\right].
			$$
			
			\textbf{Specialization under binary rewards.}
			$$
			\mathcal J_0(\theta)
			=
			\mathbb E_q
			\sqrt{p(q)(1-p(q))}
			\,
			\mathbb E_{o\sim\pi_{\rm old}^+(\cdot|q),\,o'\sim\pi_{\rm old}^-(\cdot|q)}
			\big[
			s_\theta^+(o,q)-s_\theta^-(o',q)
			\big].
			$$
			
			\footnotesize
			where $A(o|q)$ denotes the GRPO group-relative advantage.
			The distributions $\widetilde{\pi}_q^+$ and $\widetilde{\pi}_q^-$ are obtained by reweighting rollouts from $\pi_{\rm old}(\cdot|q)$ using the positive and negative parts of $A(o|q)$, respectively; under binary rewards, they reduce to $\pi_{\rm old}^+(\cdot|q)$ and $\pi_{\rm old}^-(\cdot|q)$, which are the conditional distributions induced by $\pi_{\rm old}$ over verified positive ($r=1$) and negative ($r=0$) rollouts.
			The quantity $p(q)=\mathbb E_{o\sim\pi_{\rm old}(\cdot|q)}[r(o,q)]$ is the expected reward under $\pi_{\rm old}$, i.e., the pass rate under binary rewards, and $s_\theta^+$ and $s_\theta^-$ are sequence-level averages of clipped token-level importance sampling ratios.
		\end{thmbox}
		\caption{\textbf{Equivalent discriminative view of GRPO.}
			In general, the GRPO objective can be written as a weighted positive-negative discrimination objective, with rollout distributions induced by the positive and negative parts of the relative advantage and scores given by averages of clipped importance ratios.
			In the common RLVR setting with binary rewards, the weighting term simplifies to $\sqrt{p(q)(1-p(q))}$, a factor determined by the group pass rate $p(q)$.}
		\label{fig:grpo-discrimination}
	\end{figure*}
	Reinforcement learning with verifiable rewards (RLVR) has become a standard paradigm for post-training LLMs on reasoning tasks.
	Among RLVR algorithms, Group Relative Policy Optimization (GRPO)~\citep{shao2024deepseekmath, guo2025deepseek} is widely adopted for its critic-free design, which estimates group-relative advantages from outcome rewards within sampled rollout groups.
	Recent studies improve GRPO by modifying importance sampling to mitigate entropy collapse~\citep{zhao2026geometricmean,zheng2025groupsequencepolicyoptimization,gao2025soft}, shaping rewards to control reasoning lengths~\citep{huang2026hapo,yi2025shorterbetter,tan2025towards}, or extending the framework to agentic applications~\citep{jin2025searchr,chen2026research,zhang2026prunetirinferencetimetoolpruning}.
	Despite these advances, existing approaches mainly adjust GRPO around its original objective, leaving the core optimization mechanism underexplored.
	
	Recent studies have highlighted the potential role of discriminative learning principles~\citep{su2025trust,cui2026clipo}, providing a useful departure point from the standard group-relative advantage formulation.
	We make this connection explicit by rewriting GRPO as a weighted positive-negative discrimination objective (Proposition~\ref{proposition:1}), whose general form and binary-reward specialization are summarized in Figure~\ref{fig:grpo-discrimination}.
	Under this view, GRPO increases the sequence-level scores of verified positive rollouts and decreases those of verified negative rollouts, where the scores are averages of clipped token-level importance sampling ratios.
	This reveals \emph{likelihood-misaligned surrogate scores}: GRPO separates positive and negative rollouts using scores derived from clipped importance ratios, rather than the sequence likelihoods that determine generation.
	Consequently, GRPO can increase the surrogate score gap between positive and negative rollouts without improving their likelihood ordering under the current policy.
	This training-inference mismatch weakens the connection between objective improvement and generation quality, leading to suboptimal policy optimization~\citep{meng2024simpo}.
	
	The binary-reward specialization further shows that GRPO reduces to positive-negative discrimination scaled by the group pass rate $p(q)$ through the factor $\sqrt{p(q)(1-p(q))}$.
	This pass-rate weighting exposes \emph{score-insensitive credit assignment}: GRPO determines rollout-level credit based on reward-based advantages tied to the group pass rate, rather than on current score gaps between positive and negative rollouts.
	As detailed by the gradient analysis in Lemma~\ref{lemma:grpo-gradient} and illustrated in Figure~\ref{fig:conspo-gradient}, this mechanism assigns the same coefficient to all positive rollouts in a group and another coefficient to all negative rollouts.
	The resulting update is insensitive to within-group score gaps, treating a positive rollout close to negatives and a high-scoring hard negative equivalently to other rollouts with the same reward.
	Dynamic sampling mitigates this issue by retaining groups with intermediate pass rates~\citep{yu2025dapo,bae2026online,jiang2025vcrl}, but it does not remove the underlying source in the GRPO objective.
	
	\begin{figure*}[t]
	\centering
	\includegraphics[width=1.0\linewidth]{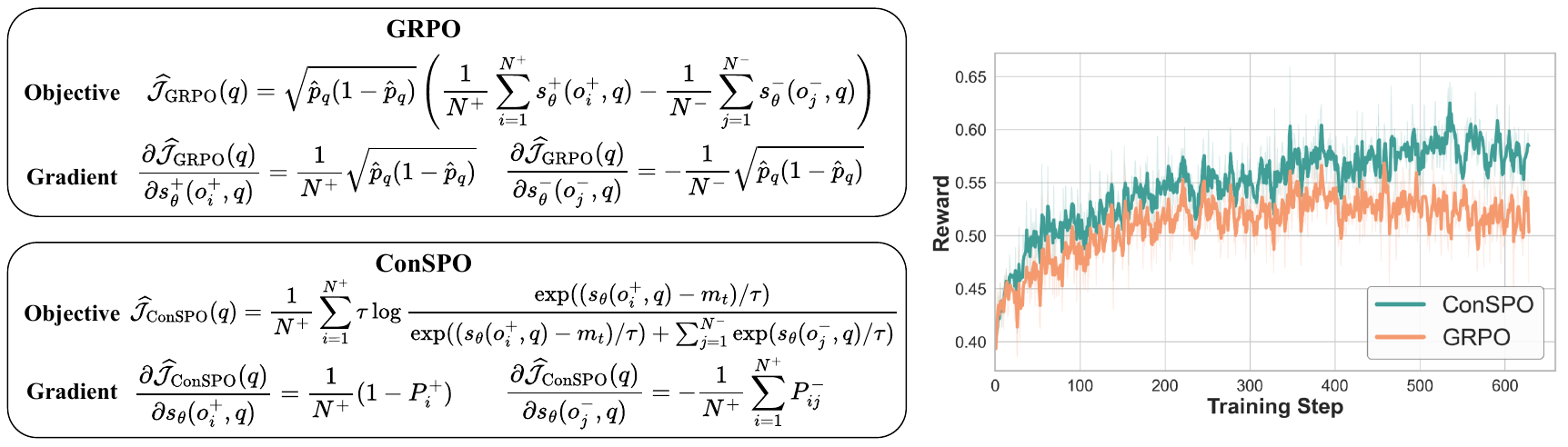}
	\caption{
		\textbf{Left: sequence-level gradient comparison between GRPO and ConSPO.}
		In the GRPO block, $\widehat p_q$ denotes the empirical pass rate of the sampled group, indicating that the credit assigned to positive and negative rollouts is determined by group-level statistics rather than within-group score gaps.
		In the ConSPO block, $P_i^+$ and $P_{ij}^-$ are computed as in Eq.~\eqref{eq:conspo-prob} with $s_\theta(o_i^+,q)$ replaced by $s_\theta(o_i^+,q)-m_t$; the resulting credit assignment depends on current within-group score comparisons.
		\textbf{Right: training dynamics.}
		The curves show that ConSPO achieves consistently higher rewards than GRPO during training.
	}
	\label{fig:conspo-gradient}
	\end{figure*}
	
	To address these limitations, we propose ConSPO, a \textbf{Con}trastive \textbf{S}equence-level \textbf{P}olicy \textbf{O}ptimization method for RLVR.
	ConSPO addresses \emph{likelihood-misaligned surrogate scores} by replacing the clipped importance sampling ratio scores in GRPO with length-normalized sequence log-probabilities.
	This likelihood-based score directly matches the sequence probabilities that govern autoregressive generation, aligning training with inference.
	To address \emph{score-insensitive credit assignment}, ConSPO optimizes a group-wise InfoNCE-style objective~\citep{oord2018representation}, contrasting each verified positive rollout with negative distractors generated for the same question.
	As shown in Lemma~\ref{lemma:conspo-gradient} and Figure~\ref{fig:conspo-gradient}, the softmax normalization makes the positive-rollout gradient depend on its separation from the negative distractors: the update is strengthened when the positive score remains close to the distractors and smoothly attenuated once it becomes well separated.
	For negative rollouts, the total suppressive signal increases when positives are poorly separated from the distractor set, with larger updates assigned to higher-scoring incorrect rollouts.
	ConSPO further introduces a margin to maintain separation pressure when positive rollouts only marginally outperform negative distractors.
	We schedule this margin as a curriculum~\citep{bengio2009curriculum,wang2021survey}, allowing optimization to begin with coarse positive-negative ordering and gradually shift toward stronger separation.
	Experiments across multiple backbone models, model scales, and training datasets show that ConSPO consistently improves performance on seven challenging mathematical reasoning benchmarks.
	Our main contributions are summarized as follows:
	\begin{itemize}[leftmargin=*]
		\item We derive an equivalent discriminative reformulation of GRPO that reveals two objective-level limitations: \emph{likelihood-misaligned surrogate scores} and \emph{score-insensitive credit assignment}.
		
		\item We propose ConSPO, which optimizes an InfoNCE-style sequence-level objective with likelihood-aligned scores and a curriculum-scheduled margin to address the above limitations.
		
		\item Experiments on challenging reasoning benchmarks show that ConSPO consistently outperforms strong RLVR baselines across different models, model scales, and training datasets.
	\end{itemize}
	\section{Related Work}
	\label{sec:related}
	\paragraph{RLVR for LLM reasoning.}
	GRPO~\citep{guo2025deepseek} estimates group-relative advantages from outcome rewards without a learned value model, motivating extensive efforts in RLVR for reasoning.
	Recent work improves GRPO by reallocating losses around the objective~\citep{liu2025understanding,yu2025dapo}, adjusting importance sampling mechanisms for more stable optimization~\citep{zhao2026geometricmean,chen2025minimax,gao2025soft}, or shaping rewards toward more efficient reasoning~\citep{huang2026hapo,yi2025shorterbetter}.
	DisCO~\citep{li2025disco} reinterprets GRPO through a discriminative lens under binary rewards and explores alternative scoring functions, but does not address the score-insensitive credit assignment induced by the GRPO objective.
	ConSPO instead targets this limitation by optimizing an InfoNCE-style objective for online RLVR, with likelihood-aligned sequence scores and a curriculum-scheduled margin.
	\paragraph{Contrastive objectives for policy alignment.}
	Contrastive learning distinguishes positive samples from negative distractors, commonly through InfoNCE-style objectives~\citep{oord2018representation}, with representative methods enlarging negative sets via large-batch sampling or momentum queues~\citep{chen2020simple,he2020momentum}.
	Recent policy alignment methods adopt similar discriminative principles to optimize policy likelihoods over preferred and dispreferred responses without explicit reward modeling~\citep{rafailov2023direct,xu2024contrastive,meng2024simpo}.
	Unlike these offline preference optimization methods, ConSPO transfers the contrastive principle to online RLVR, where positive-negative comparisons are constructed from verifiable rewards on policy-generated rollouts rather than from static pairwise preference data.
	Detailed related work is provided in Appendix~\ref{app:related}.
	\section{Preliminaries}
	\label{sec:pre}
	Given a query $q\in Q$, GRPO samples $G$ rollouts
	$\mathbf{o}=\{o_i\}_{i=1}^G$ from the old policy
	$\pi_{\rm old}(\cdot\mid q)$. Each rollout
	$o_i=(o_{i,1},\ldots,o_{i,|o_i|})$ is generated autoregressively and receives
	a verifiable sequence-level reward $r_i=r(o_i,q)$. The group-relative advantage
	is computed as
	$\widehat A_i=(r_i-\widehat\mu_q)/\widehat\sigma_q$, where
	\begin{equation*}
		\widehat{\mu}_q
		=
		\frac{1}{G}\sum_{j=1}^G r_j,
		\qquad
		\widehat{\sigma}_q^2
		=
		\frac{1}{G}\sum_{j=1}^G
		(r_j-\widehat{\mu}_q)^2 .
		\label{eq:grpo-advantage}
	\end{equation*}
	
	For the analysis in Section~\ref{sec:methods}, we also use the corresponding expected advantage under
	$\pi_{\rm old}(\cdot\mid q)$. Let $\mu_q$ and $\sigma_q^2$ denote the mean and
	variance of $r(o,q)$ over $o\sim\pi_{\rm old}(\cdot\mid q)$. Then
	\begin{equation}
		A(o\mid q)
		=
		\frac{r(o,q)-\mu_q}{\sigma_q}.
		\label{eq:expected-advantage}
	\end{equation}
	
	Subsequently, the policy parameters $\theta$ are updated by maximizing a clipped
	surrogate objective, augmented with a KL divergence penalty to prevent severe
	deviation from a reference model $\pi_{\rm ref}$:\par\vspace{-1.0em}
	{
		\small
		\begin{equation}
			\begin{aligned}
				&\mathcal{J}_{\rm GRPO}(\theta)
				=
				\mathbb{E}_{q\sim Q,\,\{o_i\}_{i=1}^G\sim\pi_{\rm old}(\cdot\mid q)}
				\\[-0.2em]
				&\Bigg[
				\frac{1}{G}\sum_{i=1}^G
				\frac{1}{|o_i|}
				\sum_{t=1}^{|o_i|}
				f\!\left(\rho_{i,t}(\theta), \widehat{A}_i\right)
				-
				\beta D_{\rm KL}(\pi_\theta \| \pi_{\rm ref})
				\Bigg].
			\end{aligned}
			\label{eq:grpo-objective}
		\end{equation}
	}
	The token-level importance sampling ratio $\rho_{i,t}(\theta)$ and clipping function $f(\rho,A)$ are defined as
	\begin{equation}
		\begin{aligned}
			\rho_{i,t}(\theta)
			&=
			\frac{
				\pi_\theta(o_{i,t}\mid q,o_{i,<t})
			}{
				\pi_{\rm old}(o_{i,t}\mid q,o_{i,<t})
			},
			\\
			f(\rho,A)
			&=
			\min\!\left(
			\rho A,\,
			\operatorname{clip}(\rho,1-\epsilon,1+\epsilon)A
			\right).
		\end{aligned}
		\label{eq:grpo-clip}
	\end{equation}
	\section{Methods}
	\label{sec:methods}
	In this section, we first reformulate GRPO as a weighted positive-negative discrimination objective, exposing its likelihood-misaligned surrogate scores and score-insensitive credit assignment.
	We then propose ConSPO, which optimizes
	a group-wise InfoNCE-style objective with likelihood-aligned sequence scores and incorporates a curriculum-scheduled margin to maintain separation pressure.
	\subsection{An Equivalent Discriminative Reformulation of GRPO}
	\label{subsec:discriminative-perspective}
	Let $\mathcal J_{\rm GRPO}^{\rm clip}(\theta)$ denote the clipped policy surrogate in expectation form, obtained from Eq.~\eqref{eq:grpo-objective} by removing the KL regularizer.
	Using the expected advantage $A(o\mid q)$ defined in Eq.~\eqref{eq:expected-advantage}, we show that this surrogate admits an equivalent discriminative reformulation.
	\begin{proposition}
		\label{proposition:1}
		For queries with $\sigma_q>0$, the clipped GRPO surrogate $\mathcal J_{\rm GRPO}^{\rm clip}(\theta)$ admits the following equivalent discriminative form:
		\begin{equation}
			\begin{aligned}
				&\mathcal{J}_{\rm GRPO}^{\rm clip}(\theta)
				=
				\mathbb{E}_q
				\Bigg[
				\frac{1}{2}
				\mathbb{E}_{\bar{o}\sim \pi_{\rm old}(\cdot \mid q)}
				\big[
				|A(\bar{o} \mid q)|
				\big]
				\\[-0.4em]
				&
				\mathbb{E}_{o\sim\widetilde{\pi}_q^+,\,o'\sim\widetilde{\pi}_q^-}
				\big[
				s_\theta^+(o,q)-s_\theta^-(o',q)
				\big]
				\Bigg].
				\label{eq:grpo-general-reformulation}
			\end{aligned}
		\end{equation}
		\par
		\noindent where $\widetilde{\pi}_q^+$ and $\widetilde{\pi}_q^-$ denote the normalized reweightings of $\pi_{\rm old}(\cdot\mid q)$ by the positive and negative parts of $A(o\mid q)$, respectively. In the common RLVR setting with binary rewards $r(o,q)\in\{0,1\}$ and valid queries satisfying $0<p(q)<1$, Eq.~\eqref{eq:grpo-general-reformulation} becomes
		\begin{equation}
			\begin{aligned}
				&\mathcal{J}_{\rm GRPO}^{\rm clip}(\theta)
				=
				\mathbb{E}_q
				\sqrt{p(q)(1-p(q))}
				\\
				&\mathbb{E}_{o\sim\pi_{\rm old}^+(\cdot \mid q),\,o'\sim\pi_{\rm old}^-(\cdot \mid q)}
				\big[
				s_\theta^+(o,q)-s_\theta^-(o',q)
				\big].
				\label{eq:grpo-binary-reformulation}
			\end{aligned}
		\end{equation}
		where $p(q)=\mathbb{E}_{o\sim\pi_{\rm old}(\cdot\mid q)}[r(o,q)]$ is the expected reward, i.e., the pass rate, under $\pi_{\rm old}$, and $\pi_{\rm old}^+(\cdot\mid q)$ and $\pi_{\rm old}^-(\cdot\mid q)$ denote $\pi_{\rm old}$ conditioned on verified positive and negative outcomes, respectively.
	\end{proposition}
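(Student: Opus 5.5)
We first fix the objects the statement leaves implicit. Replacing the empirical advantage $\widehat A_i$ by the expected advantage $A(o\mid q)$ of Eq.~\eqref{eq:expected-advantage} and dropping the KL term, the clipped surrogate in expectation form is
\[
\mathcal J_{\rm GRPO}^{\rm clip}(\theta)=\mathbb E_q\,\mathbb E_{o\sim\pi_{\rm old}(\cdot\mid q)}\Big[\tfrac{1}{|o|}\textstyle\sum_{t}f(\rho_t(\theta),A(o\mid q))\Big].
\]
The key observation is that $f(\rho,A)$ is positively homogeneous in $A$ separately on each sign. For $A>0$ we have $f(\rho,A)=A\min(\rho,\operatorname{clip}(\rho,1-\epsilon,1+\epsilon))=A\min(\rho,1+\epsilon)$. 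For $A<0$ we have $f(\rho,A)=A\max(\rho,\operatorname{clip}(\rho,1-\epsilon,1+\epsilon))=-|A|\max(\rho,1-\epsilon)$. We therefore define the scores
\[
s_\theta^+(o,q)=\tfrac1{|o|}\textstyle\sum_t\min(\rho_t,1+\epsilon),\qquad s_\theta^-(o,q)=\tfrac1{|o|}\textstyle\sum_t\max(\rho_t,1-\epsilon).
\]
With these, the inner expectation becomes $\mathbb E_{o}[A^+(o)s_\theta^+(o)]-\mathbb E_{o}[A^-(o)s_\theta^-(o)]$, where $A^\pm=\max(\pm A,0)$ and terms with $A=0$ contribute nothing.

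Next we normalize. Since $\mathbb E_{\pi_{\rm old}}[A(o\mid q)]=0$ by construction, we get $\mathbb E[A^+]=\mathbb E[A^-]=\tfrac12\mathbb E|A|=:Z_q$. This quantity is positive because $\sigma_q>0$ means $A$ is not almost surely zero. Define $\widetilde\pi_q^\pm(o)=\pi_{\rm old}(o\mid q)A^\pm(o\mid q)/Z_q$. Then the inner expectation equals
\[
Z_q\big(\mathbb E_{\widetilde\pi_q^+}[s_\theta^+]-\mathbb E_{\widetilde\pi_q^-}[s_\theta^-]\big).
\]
Because $o$ and $o'$ are drawn independently and each score depends on only one of them, this difference is $\mathbb E_{o\sim\widetilde\pi^+,o'\sim\widetilde\pi^-}[s^+_\theta(o)-s^-_\theta(o')]$. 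This gives Eq.~\eqref{eq:grpo-general-reformulation}.

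For the binary specialization, take $r\in\{0,1\}$ with $p=p(q)\in(0,1)$. Then $\mu_q=p$ and $\sigma_q=\sqrt{p(1-p)}$, so $A=(1-p)/\sigma_q$ on correct rollouts and $A=-p/\sigma_q$ on incorrect ones. It follows that $\widetilde\pi_q^+$ is proportional to $\pi_{\rm old}\cdot\mathbf 1[r=1]$, which is $\pi_{\rm old}^+$, and similarly $\widetilde\pi_q^-=\pi_{\rm old}^-$. The weight is $Z_q=p(1-p)/\sigma_q=\sqrt{p(1-p)}$, which gives Eq.~\eqref{eq:grpo-binary-reformulation}.

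The main obstacle is conceptual rather than computational. The statement only makes sense under two conventions: the sign-dependent definition of $s^\pm_\theta$, which must come out of the case split on $f$, and the convention that the "expectation form" uses the population advantage $A$ rather than the group statistics $\widehat A_i$. We will state both conventions explicitly; with them fixed, the sign-wise homogeneity of $f$ and the identity $\mathbb E[A]=0$ carry the entire argument. We also need Fubini-type interchange for general rewards, which holds because the clipped ratios are nonnegative and finite for the fixed rollout distribution.
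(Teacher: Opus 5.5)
Your proposal is correct and follows essentially the same route as the paper: the sign-wise decomposition $f(\rho,A)=A^+\min(\rho,1+\epsilon)-A^-\max(\rho,1-\epsilon)$, the normalization $\mathbb E[A^+]=\mathbb E[A^-]=\tfrac12\mathbb E|A|>0$ (using $\mathbb E[A]=0$ and $\sigma_q>0$), and the binary computation giving weight $\sqrt{p(q)(1-p(q))}$ with $\widetilde\pi_q^\pm=\pi_{\rm old}^\pm$. Your explicit remarks on the population-advantage convention and on integrability are added care that the paper leaves implicit.
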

	
	The proof is provided in Appendix~\ref{app:grpo-reformulation}.
	The sequence-level surrogate scores $s_\theta^+$ and $s_\theta^-$ in Proposition~\ref{proposition:1} are obtained by averaging the clipped token-level importance sampling ratios:\par
	{\small
		\begin{equation}
			\begin{aligned}
				s_\theta^+(o,q) 
				&= 
				\frac{1}{|o|}
				\sum_{t=1}^{|o|}
				\min\left(
				\frac{\pi_\theta(o_t \mid q,o_{<t})}
				{\pi_{\rm old}(o_t \mid q,o_{<t})},
				1+\epsilon
				\right),
				\\
				s_\theta^-(o',q) 
				&= 
				\frac{1}{|o'|}
				\sum_{t=1}^{|o'|}
				\max\left(
				\frac{\pi_\theta(o'_t \mid q,o'_{<t})}
				{\pi_{\rm old}(o'_t \mid q,o'_{<t})},
				1-\epsilon
				\right).
			\end{aligned}
			\label{eq:score-positive-negative}
		\end{equation}
	}
	To make rollout-level credit assignment explicit, we instantiate the binary reformulation in Eq.~\eqref{eq:grpo-binary-reformulation} on a sampled group with $N^+>0$ and $N^->0$, and differentiate the resulting empirical objective with respect to the rollout scores.
	\begin{lemma}
		\label{lemma:grpo-gradient}
		Consider a sampled valid group with a positive rollout set 
		$\mathcal P_q=\{o_i^+\}_{i=1}^{N^+}$ and a negative rollout set 
		$\mathcal N_q=\{o_j^-\}_{j=1}^{N^-}$, and let 
		$\widehat p_q=N^+/(N^++N^-)$ denote its empirical pass rate.
		Replacing the conditional expectations in Eq.~\eqref{eq:grpo-binary-reformulation} with empirical averages gives
		\begin{equation}
			\begin{aligned}
				&\widehat{\mathcal J}_{\rm GRPO}^{\rm clip}(q)
				=
				\sqrt{\widehat p_q(1-\widehat p_q)}
				\\
				&\left(
				\frac{1}{N^+}\sum_{i=1}^{N^+} s_\theta^+(o_i^+,q)
				-
				\frac{1}{N^-}\sum_{j=1}^{N^-} s_\theta^-(o_j^-,q)
				\right).
				\label{eq:grpo-empirical-discriminative}
			\end{aligned}
		\end{equation}
		The derivatives with respect to the positive and negative rollout scores are
		\begin{equation*}
			\begin{aligned}
				\frac{\partial \widehat{\mathcal J}_{\rm GRPO}^{\rm clip}(q)}
				{\partial s_\theta^+(o_i^+,q)}
				&=
				\frac{1}{N^+}
				\sqrt{\widehat p_q(1-\widehat p_q)},
				\\
				\frac{\partial \widehat{\mathcal J}_{\rm GRPO}^{\rm clip}(q)}
				{\partial s_\theta^-(o_j^-,q)}
				&=
				-
				\frac{1}{N^-}
				\sqrt{\widehat p_q(1-\widehat p_q)}.
			\end{aligned}
			\label{eq:grpo-grad-pos-neg}
		\end{equation*}
	\end{lemma}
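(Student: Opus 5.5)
The lemma has two parts: the empirical objective in Eq.~\eqref{eq:grpo-empirical-discriminative}, and its derivatives with respect to the rollout scores. I will obtain the objective as the sample analogue of the binary form in Proposition~\ref{proposition:1}, and then differentiate it term by term. The only care needed is to treat the group-level quantities $\widehat p_q$, $N^+$, $N^-$ as constants. They depend on the sampled rewards only, not on $\theta$ or the scores.

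\textbf{Step 1: the empirical objective.} Start from Eq.~\eqref{eq:grpo-binary-reformulation} for a fixed query $q$ with $0<p(q)<1$. Make three plug-in substitutions using the sampled valid group ($N^+,N^->0$):
\begin{itemize}[leftmargin=*]
\item replace $p(q)$ by $\widehat p_q=N^+/(N^++N^-)$;
\item replace the expectation over $\pi_{\rm old}^+(\cdot\mid q)$ by the uniform average over $\mathcal P_q$;
\item replace the expectation over $\pi_{\rm old}^-(\cdot\mid q)$ by the uniform average over $\mathcal N_q$.
\end{itemize}
Conditioned on the rewards, the positive rollouts are draws from $\pi_{\rm old}^+$ and the negative rollouts are draws from $\pi_{\rm old}^-$, so these averages are the natural estimators. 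Since $o$ and $o'$ are independent under the product measure, the expectation of the difference factorizes by linearity:
$$\mathbb E[s^+-s^-]=\mathbb E[s^+]-\mathbb E[s^-].$$
The double empirical average $\frac{1}{N^+N^-}\sum_{i,j}(s^+_i-s^-_j)$ therefore collapses to the difference of the two single averages, which is exactly Eq.~\eqref{eq:grpo-empirical-discriminative}.

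As a consistency check, the same expression can be derived directly from Eq.~\eqref{eq:grpo-objective}. With binary rewards, $\widehat\mu_q=\widehat p_q$ and $\widehat\sigma_q=\sqrt{\widehat p_q(1-\widehat p_q)}$. This gives
$$\widehat A^+=\sqrt{(1-\widehat p_q)/\widehat p_q},\qquad \widehat A^-=-\sqrt{\widehat p_q/(1-\widehat p_q)}.$$
Using $1/G=\widehat p_q/N^+=(1-\widehat p_q)/N^-$, the coefficient on each positive rollout is $\frac{1}{G}\widehat A^+=\sqrt{\widehat p_q(1-\widehat p_q)}/N^+$. The coefficient on each negative rollout is the analogous expression with $-1/N^-$. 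This requires identifying the token-averaged clipped term $f(\rho,\widehat A)$ with $\widehat A\cdot s^\pm$. That identification holds because for $A>0$ we have $\min(\rho A,\mathrm{clip}(\rho)A)=A\min(\rho,1+\epsilon)$, and for $A<0$ it equals $A\max(\rho,1-\epsilon)$. This matches Eq.~\eqref{eq:score-positive-negative}.

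\textbf{Step 2: the derivatives.} Treat each $s_\theta^+(o_i^+,q)$ and $s_\theta^-(o_j^-,q)$ as an independent variable. The empirical objective is linear in these variables, and the prefactor $\sqrt{\widehat p_q(1-\widehat p_q)}$ and the weights $1/N^\pm$ are constants. Differentiating then gives the stated derivatives immediately: $\sqrt{\widehat p_q(1-\widehat p_q)}/N^+$ for each positive score and $-\sqrt{\widehat p_q(1-\widehat p_q)}/N^-$ for each negative score.

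\textbf{Main obstacle.} The differentiation is trivial; the only real point is in Step 1. One must justify that substituting the empirical pass rate into the weight is consistent with GRPO's actual empirical normalization, rather than an ad hoc choice. The direct computation from $\widehat A_i$ in Step 1 settles this, because it reproduces the same coefficients exactly.
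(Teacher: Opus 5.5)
Your proposal is correct and takes the same approach as the paper, whose proof simply takes Eq.~\eqref{eq:grpo-empirical-discriminative} as the plug-in form and differentiates this linear expression term by term, with $\widehat p_q$, $N^+$, $N^-$ held fixed. Your extra consistency check goes beyond the paper and is correct: with binary rewards, $\frac{1}{G}\widehat A_i=\pm\sqrt{\widehat p_q(1-\widehat p_q)}/N^\pm$, so the per-group clipped GRPO surrogate coincides exactly with this empirical objective, which the paper never spells out.
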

	
	The proof of Lemma~\ref{lemma:grpo-gradient} follows by direct differentiation and is provided in Appendix~\ref{app:grpo-reformulation}.
	\paragraph{Implications.}
	The reformulation identifies two objective-level limitations of GRPO.
	First, Eq.~\eqref{eq:score-positive-negative} reveals \emph{likelihood-misaligned surrogate scores}: the positive-negative separation in GRPO is measured by sequence-level averages of clipped importance ratios, rather than by the sequence likelihoods that govern generation.
	Consequently, enlarging the surrogate score gap in Eq.~\eqref{eq:grpo-binary-reformulation} does not necessarily improve the likelihood ordering between positive and negative rollouts under the current policy.
	Second, Lemma~\ref{lemma:grpo-gradient} reveals \emph{score-insensitive credit assignment}: rollout-level credit is determined by coefficients tied to the empirical pass rate $\sqrt{\widehat p_q(1-\widehat p_q)}$, rather than by current positive-negative score gaps within the group.
	As a result, all positives share the same update coefficient and all negatives share another, making the objective insensitive to whether a positive remains poorly separated from negatives or whether a negative is a high-scoring hard distractor.
	\subsection{A Contrastive View of RLVR Optimization}
	\label{subsec:contrastive-rlvr}
	Motivated by the limitations above, ConSPO uses likelihood-aligned sequence scores and group-wise positive-negative contrasts.
	Specifically, it scores each rollout by the length-normalized log-probability $s_\theta(o,q)=\frac{1}{|o|}\sum_{t=1}^{|o|}\log \pi_\theta(o_t\mid q,o_{<t})$.
	Unlike the clipped surrogate scores $s_\theta^+$ and $s_\theta^-$ in Eq.~\eqref{eq:score-positive-negative}, the same scoring function is applied to both positive and negative rollouts, placing all rollouts on the likelihood scale that governs autoregressive generation.
	Given the rollout sets $\mathcal P_q$ and $\mathcal N_q$, ConSPO treats negative rollouts as distractors and optimizes the following InfoNCE-style objective:\par\vspace{-1.0em}
	\begin{equation}
		\begin{aligned}
			&\widehat{\mathcal J}_{\rm NCE}(q)
			=
			\frac{1}{N^+}
			\sum_{i=1}^{N^+}
			\tau
			\\[-0.2em]
			&\log\frac{
				\exp(s_\theta(o_i^+,q)/\tau)
			}{
				\exp(s_\theta(o_i^+,q)/\tau)
				+
				\sum_{j=1}^{N^-}
				\exp(s_\theta(o_j^-,q)/\tau)
			}.
			\label{eq:conspo-infonce}
		\end{aligned}
	\end{equation}
	where $\tau>0$ is the temperature parameter.
	Unlike the linear positive-negative score difference in Eq.~\eqref{eq:grpo-binary-reformulation}, Eq.~\eqref{eq:conspo-infonce} compares each positive rollout against the negative set via softmax normalization.
	For each positive rollout $o_i^+$, we define the contrastive probabilities assigned to itself and to its negative distractors as\par\vspace{-1.0em}
	{\small
		\begin{equation}
			\begin{aligned}
				P_i^+
				&=
				\frac{
					\exp(s_\theta(o_i^+,q)/\tau)
				}{
					\exp(s_\theta(o_i^+,q)/\tau)
					+
					\sum_{k=1}^{N^-}
					\exp(s_\theta(o_k^-,q)/\tau)
				},
				\\
				P_{ij}^-
				&=
				\frac{
					\exp(s_\theta(o_j^-,q)/\tau)
				}{
					\exp(s_\theta(o_i^+,q)/\tau)
					+
					\sum_{k=1}^{N^-}
					\exp(s_\theta(o_k^-,q)/\tau)
				}.
			\end{aligned}
			\label{eq:conspo-prob}
		\end{equation}
	}
	
	By construction, $P_i^+ + \sum_{j=1}^{N^-}P_{ij}^- = 1$ for each positive rollout $o_i^+$.
	These probabilities make rollout-level credit depend on within-group scores, as formalized in Lemma~\ref{lemma:conspo-gradient}.
	\begin{lemma}
		\label{lemma:conspo-gradient}
		For the InfoNCE-style objective in Eq.~\eqref{eq:conspo-infonce}, the derivatives with respect to the positive and negative rollout scores are
		\begin{equation}
			\begin{aligned}
				\frac{\partial \widehat{\mathcal J}_{\rm NCE}(q)}
				{\partial s_\theta(o_i^+,q)}
				&=
				\frac{1}{N^+}
				(1-P_i^+),
				\\
				\frac{\partial \widehat{\mathcal J}_{\rm NCE}(q)}
				{\partial s_\theta(o_j^-,q)}
				&=
				-
				\frac{1}{N^+}
				\sum_{i=1}^{N^+}
				P_{ij}^-.
			\end{aligned}
			\label{eq:conspo-gradient}
		\end{equation}
		Moreover, the aggregate derivative over the negative rollout scores and the total derivative over the group satisfy
		\begin{equation}
			\begin{aligned}
				&
				\sum_{j=1}^{N^-}
				\frac{\partial \widehat{\mathcal J}_{\rm NCE}(q)}
				{\partial s_\theta(o_j^-,q)}
				=
				-
				\frac{1}{N^+}
				\sum_{i=1}^{N^+}
				(1-P_i^+),
				\\
				&
				\sum_{i=1}^{N^+}
				\frac{\partial \widehat{\mathcal J}_{\rm NCE}(q)}
				{\partial s_\theta(o_i^+,q)}
				+
				\sum_{j=1}^{N^-}
				\frac{\partial \widehat{\mathcal J}_{\rm NCE}(q)}
				{\partial s_\theta(o_j^-,q)}
				=
				0.
			\end{aligned}
			\label{eq:conspo-gradient-aggregate}
		\end{equation}
		For any two negative rollouts $o_j^-$ and $o_k^-$, the magnitudes of their negative-score derivatives satisfy
		\begin{equation}
			\begin{aligned}
				&
				\frac{
					\left|
					\partial \widehat{\mathcal J}_{\rm NCE}(q) / \partial s_\theta(o_j^-,q)
					\right|
				}{
					\left|
					\partial \widehat{\mathcal J}_{\rm NCE}(q) / \partial s_\theta(o_k^-,q)
					\right|
				}
				=
				\frac{\sum_{i=1}^{N^+}P_{ij}^-}{\sum_{i=1}^{N^+}P_{ik}^-}
				\\
				&=
				\exp\left(
				\frac{s_\theta(o_j^-,q)-s_\theta(o_k^-,q)}{\tau}
				\right).
			\end{aligned}
			\label{eq:conspo-hard-negative}
		\end{equation}
	\end{lemma}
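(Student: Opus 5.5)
The plan is to treat the rollout scores $s_\theta(o_i^+,q)$ and $s_\theta(o_j^-,q)$ as free scalar variables and differentiate Eq.~\eqref{eq:conspo-infonce} term by term. For brevity write $s_i^+=s_\theta(o_i^+,q)$, $s_j^-=s_\theta(o_j^-,q)$, and $Z_i=\exp(s_i^+/\tau)+\sum_{k=1}^{N^-}\exp(s_k^-/\tau)$. Then each summand equals $\tau\big(s_i^+/\tau-\log Z_i\big)=s_i^+-\tau\log Z_i$.

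\textbf{Derivative with respect to a positive score.} The variable $s_i^+$ appears only in the $i$-th summand. There, $\partial(s_i^+-\tau\log Z_i)/\partial s_i^+=1-\exp(s_i^+/\tau)/Z_i=1-P_i^+$, using Eq.~\eqref{eq:conspo-prob}. Multiplying by the prefactor $1/N^+$ gives the first line of Eq.~\eqref{eq:conspo-gradient}.

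\textbf{Derivative with respect to a negative score.} The variable $s_j^-$ appears in every $Z_i$. We have $\partial(-\tau\log Z_i)/\partial s_j^-=-\exp(s_j^-/\tau)/Z_i=-P_{ij}^-$. Summing over $i$ and multiplying by $1/N^+$ gives the second line.

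\textbf{Aggregate identities.} Sum the negative derivatives over $j$ and swap the finite sums. The normalization identity $\sum_j P_{ij}^-=1-P_i^+$ (noted after Eq.~\eqref{eq:conspo-prob}) then gives the first line of Eq.~\eqref{eq:conspo-gradient-aggregate}. Summing the positive derivatives over $i$ gives $\frac{1}{N^+}\sum_i(1-P_i^+)$, which exactly cancels the negative aggregate, so the total derivative vanishes.

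\textbf{Hard-negative ratio.} The magnitude of the $j$-th negative derivative is $\frac{1}{N^+}\sum_i P_{ij}^-$, since every $P_{ij}^->0$. The ratio for $j$ and $k$ is therefore $\sum_iP_{ij}^-/\sum_iP_{ik}^-$. Because $P_{ij}^-=\exp(s_j^-/\tau)/Z_i$, factor it as $\exp(s_j^-/\tau)\sum_i Z_i^{-1}$, and similarly for $k$. The common factor $\sum_i Z_i^{-1}$, which is positive and independent of $j$ and $k$, cancels. What remains is $\exp((s_j^--s_k^-)/\tau)$.

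None of these steps is genuinely hard. The only point needing care is the last one: one must notice that the denominator $Z_i$ depends on $i$ but not on the negative index. This is what allows factoring the negative numerator out of the sum over positives, even though the per-$i$ probabilities differ.
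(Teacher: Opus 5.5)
Your proposal is correct and follows the paper's proof essentially step for step. Both use the same shorthand $Z_i$, rewrite each summand as $s_i^+-\tau\log Z_i$, differentiate directly, apply the normalization $P_i^++\sum_j P_{ij}^-=1$ for the aggregate identities, and factor out the $j$-independent denominators for the hard-negative ratio; the paper phrases that last step as $P_{ij}^-=\exp\bigl((s_j^--s_k^-)/\tau\bigr)P_{ik}^-$ before summing over $i$, and you add the remark that $P_{ij}^->0$ justifies dropping the absolute values, which the paper leaves implicit.
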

	
	The proof is provided in Appendix~\ref{app:conspo-gradient}.
	Lemma~\ref{lemma:conspo-gradient} shows that ConSPO induces contrast-sensitive credit assignment.
	For positive rollouts, the coefficient $1-P_i^+$ increases when the positive rollout remains close to its negative distractors, and attenuates as it becomes well separated.
	For negative rollouts, Eq.~\eqref{eq:conspo-gradient-aggregate} shows that the aggregate suppressive signal has magnitude
	$\frac{1}{N^+}\sum_{i=1}^{N^+}(1-P_i^+)$, which grows when positives receive low contrastive probabilities.
	This signal is then allocated across negatives according to $P_{ij}^-$, so higher-scoring negatives receive larger derivative magnitudes than lower-scoring ones, as shown in Eq.~\eqref{eq:conspo-hard-negative}.
	The balance relation in Eq.~\eqref{eq:conspo-gradient-aggregate} further shows that positive and negative score derivatives cancel at the group level.
	Compared with Lemma~\ref{lemma:grpo-gradient}, ConSPO assigns rollout-level credit according to current within-group score comparisons, rather than fixed coefficients determined by group reward statistics.
	\begin{table*}[t]
		\centering
		\small
		\setlength{\tabcolsep}{3.5pt}
		\renewcommand{\arraystretch}{1.0}
		\begin{tabularx}{\textwidth}{l*{8}{>{\centering\arraybackslash}X}}
			\toprule
			\textbf{Method}
			& \textbf{AIME24}
			& \textbf{AIME25}
			& \textbf{AIME26}
			& \textbf{HMMT25}
			& \textbf{MATH500}
			& \textbf{AMC}
			& \textbf{O-Bench}
			& \textbf{Avg.} \\
			\midrule
			Base Model & 20.9 & 20.7 & 13.9 & 9.7 & 79.2 & 52.8 & 37.6 & 33.5 \\
			\midrule
			GRPO      & 28.6 & 22.9 & 20.4 & 11.7 & 85.4 & 64.4 & 46.5 & 40.0 \\
			DAPO      & 28.6 & 22.9 & 20.8 & 13.5 & 86.0 & 66.2 & 50.2 & 41.2 \\
			Dr.\ GRPO & 27.4 & 22.0 & 19.5 & 10.7 & 83.4 & 63.5 & 46.7 & 39.0 \\
			DisCO     & 28.3 & 24.8 & 21.7 & \underline{14.4} & 86.6 & 66.5 & 49.3 & 41.7 \\
			GMPO      & \underline{31.5} & 24.6 & \underline{23.8} & 12.8 & 87.0 & 70.0 & 49.0 & 42.7 \\
			CISPO     & 31.4 & 23.0 & 23.0 & 13.3 & \underline{87.2} & 65.2 & 47.6 & 41.5 \\
			SAPO      & 30.6 & \underline{25.3} & 23.3 & 12.8 & \underline{87.2} & \underline{70.1} & \underline{50.4} & \underline{42.8} \\
			\midrule
			\rowcolor{green!15}
			\textbf{ConSPO (Ours)}
			& \textbf{34.7}
			& \textbf{26.7}
			& \textbf{23.9}
			& \textbf{14.9}
			& \textbf{88.8}
			& \textbf{70.4}
			& \textbf{51.1}
			& \textbf{44.4} \\
			\bottomrule
		\end{tabularx}
		\caption{
			\textbf{Comparative performance on seven mathematical reasoning benchmarks with DeepSeek-R1-Distill-Qwen-1.5B.}
			We report avg@32 on AIME, HMMT, and AMC, and pass@1 on the remaining benchmarks.
			O-Bench is short for OlympiadBench.
			The best and runner-up results are highlighted in \textbf{bold} and \underline{underlined}, respectively.
		}
		\label{tab:1}
	\end{table*}
	\begin{table*}[t]
		\centering
		\small
		\setlength{\tabcolsep}{3.5pt}
		\renewcommand{\arraystretch}{1.0}
		\begin{tabularx}{\textwidth}{l*{8}{>{\centering\arraybackslash}X}}
			\toprule
			\textbf{Method}
			& \textbf{AIME24}
			& \textbf{AIME25}
			& \textbf{AIME26}
			& \textbf{HMMT25}
			& \textbf{MATH500}
			& \textbf{AMC}
			& \textbf{O-Bench}
			& \textbf{Avg.} \\
			\midrule
			
			\rowcolor{gray!10}
			\multicolumn{9}{c}{\textit{DeepSeek-R1-Distill-Qwen-7B}} \\
			\midrule
			Base Model & 41.9 & 30.3 & 35.5 & 17.5 & 88.0 & 70.3 & 50.7 & 47.7 \\
			GRPO       & 50.0 & 35.9 & 43.9 & 20.7 & 92.8 & \underline{82.8} & \underline{59.4} & 55.1 \\
			DAPO       & 49.9 & 34.2 & 37.9 & 20.4 & \underline{93.4} & 80.6 & 58.5 & 53.6 \\
			Dr.\ GRPO  & \underline{53.2} & 35.3 & 43.3 & \textbf{21.7} & 92.4 & 82.2 & 59.1 & 55.3 \\
			DisCO      & 51.8 & \underline{36.9} & \underline{45.1} & 19.6 & 93.2 & 82.0 & 59.3 & \underline{55.4} \\
			\midrule
			\rowcolor{green!15}
			\textbf{ConSPO (Ours)}
			& \textbf{54.9}
			& \textbf{39.1}
			& \textbf{46.8}
			& \underline{21.5}
			& \textbf{93.8}
			& \textbf{83.8}
			& \textbf{62.4}
			& \textbf{57.5} \\
			
			\midrule
			\rowcolor{gray!10}
			\multicolumn{9}{c}{\textit{DeepSeek-R1-Distill-Llama-8B}} \\
			\midrule
			Base Model & 28.9 & 21.3 & 20.6 & 13.5 & 81.6 & 63.5 & 42.1 & 38.8 \\
			GRPO       & 42.8 & 25.5 & \underline{34.2} & 18.5 & 88.6 & 78.6 & 55.7 & 49.1 \\
			DAPO       & 40.0 & 24.0 & 29.9 & 17.9 & 87.6 & 77.6 & 53.3 & 47.2 \\
			Dr.\ GRPO  & \textbf{44.8} & 26.8 & 32.4 & 19.6 & \underline{89.2} & \underline{80.5} & \underline{56.1} & \underline{49.9} \\
			DisCO      & \underline{44.2} & \underline{28.8} & 32.8 & \underline{19.7} & 88.8 & 78.4 & 54.7 & 49.6 \\
			\midrule
			\rowcolor{green!15}
			\textbf{ConSPO (Ours)}
			& 43.0
			& \textbf{29.6}
			& \textbf{35.5}
			& \textbf{22.6}
			& \textbf{90.4}
			& \textbf{81.1}
			& \textbf{60.7}
			& \textbf{51.8} \\
			
			\bottomrule
		\end{tabularx}
		\caption{
			\textbf{Comparative performance on larger reasoning models.}
		}
		\label{tab:2}
	\end{table*}
	\subsection{Scheduled Margin as a Separation Curriculum}
	\label{subsec:scheduled-margin}
	Although the InfoNCE-style objective in Eq.~\eqref{eq:conspo-infonce} enables contrast-sensitive credit assignment, its gradients may attenuate once positive rollouts only marginally outperform their negative distractors.
	This becomes more pronounced in later training steps, where positives and negatives may be correctly ordered but remain insufficiently separated.
	To preserve separation pressure, ConSPO introduces a scheduled margin that lowers the effective score of each positive rollout in the contrastive objective.
	The margin starts from zero, allowing early training to focus on coarse positive-negative ordering, and gradually increases to enforce stronger separation as optimization progresses.
	At training step $t$, ConSPO maximizes the resulting margin-enhanced contrastive objective:\par\vspace{-1.0em}
	{\small
		\begin{equation}
			\begin{aligned}
				&\widehat{\mathcal J}_{\rm ConSPO}(q)
				=
				\frac{1}{N^+}
				\sum_{i=1}^{N^+}
				\tau
				\\
				&\log\frac{
					\exp((s_\theta(o_i^+,q)-m_t)/\tau)
				}{
					\exp((s_\theta(o_i^+,q)-m_t)/\tau)
					+
					\sum_{j=1}^{N^-}
					\exp(s_\theta(o_j^-,q)/\tau)
				} .
			\end{aligned}
			\label{eq:conspo-margin}
		\end{equation}
	}
	where $m_t\ge 0$ is the scheduled margin.
	By subtracting $m_t$ from the positive score, ConSPO requires a positive rollout to exceed its negative distractors by a larger margin before the contrastive gradient attenuates.
	
	Let $\lambda_t\in[0,1]$ denote the normalized training progress, and let $\alpha\in(0,1]$ denote the fraction of training used for margin warmup.
	We define the effective schedule progress and the margin as
	\begin{equation*}
		\rho_t=\min\left(\frac{\lambda_t}{\alpha},1\right),
		\quad
		m_t
		=
		\frac{M}{2}
		\left(1-\cos(\pi\rho_t)\right).
		\label{eq:conspo-margin-schedule}
	\end{equation*}
	where $M$ is the target margin.
	Thus, the margin smoothly increases from zero to $M$ during the first $\alpha$ portion of training and remains fixed afterward.
	
	\section{Experiments}
	\label{sec:exp}
	\begin{table*}[t]
		\centering
		\small
		\setlength{\tabcolsep}{3.5pt}
		\renewcommand{\arraystretch}{1.0}
		\begin{tabularx}{\textwidth}{l*{8}{>{\centering\arraybackslash}X}}
			\toprule
			\textbf{Method} & \textbf{AIME24} & \textbf{AIME25} & \textbf{AIME26} & \textbf{HMMT25} & \textbf{MATH500} & \textbf{AMC} & \textbf{O-Bench} & \textbf{Avg.} \\
			\midrule
			
			Base Model & 9.2 & 5.7 & 4.8 & 0.9 & 49.4 & 31.7 & 27.1 & 18.4 \\
			GRPO       & 16.5 & 10.7 & 10.5 & 2.6 & 80.2 & 48.6 & 43.3 & 30.3 \\
			DAPO       & 14.6 & 10.1 & 8.5 & 2.2 & 82.0 & 48.4 & 43.1 & 29.8 \\
			Dr.\ GRPO  & \underline{17.4} & 11.4 & \underline{12.4} & 2.6 & \underline{82.6} & 51.7 & \underline{45.0} & \underline{31.9} \\
			DisCO      & \textbf{19.8} & \underline{12.2} & 9.4 & \underline{5.3} & \textbf{83.0} & \underline{52.9} & 39.1 & 31.7 \\
			\midrule
			\rowcolor{green!15}
			\textbf{ConSPO (Ours)} & \textbf{19.8} & \textbf{15.6} & \textbf{12.8} & \textbf{8.0} & 82.2 & \textbf{53.8} & \textbf{45.6} & \textbf{34.0} \\
			
			\bottomrule
		\end{tabularx}
		\caption{
			\textbf{Generalization on Qwen3-4B-Base.}
			We report avg@32 on AIME, HMMT, and AMC, and pass@1 on the remaining benchmarks.
			O-Bench is short for OlympiadBench.
			The best and runner-up results are highlighted in \textbf{bold} and \underline{underlined}, respectively.
		}
		\label{tab:3}
	\end{table*}
	
	\begin{table*}[t]
		\centering
		\small
		\setlength{\tabcolsep}{3.5pt}
		\renewcommand{\arraystretch}{1.0}
		\begin{tabularx}{\textwidth}{l*{8}{>{\centering\arraybackslash}X}}
			\toprule
			\textbf{Method}
			& \textbf{AIME24}
			& \textbf{AIME25}
			& \textbf{AIME26}
			& \textbf{HMMT25}
			& \textbf{MATH500}
			& \textbf{AMC}
			& \mbox{\textbf{O-Bench}}
			& \textbf{Avg.} \\
			\midrule
			
			Base Model & 20.9 & 20.7 & 13.9 & 9.7 & 79.2 & 52.8 & 37.6 & 33.5 \\
			GRPO       & 29.2 & 22.7 & \underline{21.6} & \underline{13.1} & 83.4 & 64.5 & 44.6 & 39.9 \\
			DAPO       & 29.2 & 22.4 & 19.6 & 12.8 & 83.8 & 65.4 & 49.5 & 40.4 \\
			Dr.\ GRPO  & 30.0 & 22.9 & 19.6 & 12.8 & 84.0 & 65.2 & 45.8 & 40.0 \\
			DisCO      & \underline{31.3} & \underline{24.2} & 20.2 & 12.9 & \underline{86.6} & \underline{69.5} & \underline{50.4} & \underline{42.2} \\
			\midrule
			\rowcolor{green!15}
			\textbf{ConSPO (Ours)}
			& \textbf{34.2}
			& \textbf{25.8}
			& \textbf{24.8}
			& \textbf{14.2}
			& \textbf{87.6}
			& \textbf{70.0}
			& \textbf{52.0}
			& \textbf{44.1} \\
			
			\bottomrule
		\end{tabularx}
		\caption{
			\textbf{Generalization with the DAPO-Math-17k dataset.}
		}
		\label{tab:5}
	\end{table*}
	
	\begin{table}[t]
		\centering
		\small
		\setlength{\tabcolsep}{0.5pt}
		\renewcommand{\arraystretch}{1.0}
		\begin{tabularx}{\columnwidth}{l*{5}{>{\centering\arraybackslash}X}}
			\toprule
			\textbf{Method}
			& \textbf{AIME24}
			& \textbf{MATH}
			& \textbf{AMC}
			& \mbox{\textbf{O-Bench}}
			& \textbf{Avg.} \\
			\midrule
			
			Base Model & 3.3 & 26.4 & 12.5 & 9.3 & 12.9 \\
			GRPO       & \underline{12.0} & 52.8 & 25.9 & \underline{18.7} & 27.4 \\
			DAPO       & 10.5 & 52.8 & 25.0 & 18.4 & 26.7 \\
			Dr.\ GRPO  & 11.1 & 54.0 & 26.5 & \underline{18.7} & 27.6 \\
			DisCO      & 11.4 & \underline{56.6} & \textbf{27.5} & \underline{18.7} & \underline{28.6} \\
			\midrule
			\rowcolor{green!15}
			\textbf{ConSPO}
			& \textbf{12.3}
			& \textbf{57.4}
			& \underline{26.7}
			& \textbf{20.4}
			& \textbf{29.2} \\
			
			\bottomrule
		\end{tabularx}
		\caption{
			\textbf{Generalization on Llama-3.2-3B-Instruct.}
		}
		\label{tab:4}
	\end{table}
	
	\begin{table*}[t]
		\centering
		\small
		\setlength{\tabcolsep}{3.pt}
		\renewcommand{\arraystretch}{1.0}
		\begin{tabularx}{\textwidth}{l*{8}{>{\centering\arraybackslash}X}}
			\toprule
			\textbf{Method}
			& \textbf{AIME24}
			& \textbf{AIME25}
			& \textbf{AIME26}
			& \textbf{HMMT25}
			& \textbf{MATH500}
			& \textbf{AMC}
			& \mbox{\textbf{O-Bench}}
			& \textbf{Avg.} \\
			\midrule
			
			\rowcolor{green!15}
			\textbf{ConSPO}
			& 34.7
			& 26.7
			& 23.9
			& 14.9
			& 88.8
			& 70.4
			& 51.1
			& 44.4 \\
			
			\midrule
			w/o contrastive objective & 32.0 & 24.8 & 21.1 & 13.6 & 85.8 & 69.7 & 52.1 & 42.7 \\
			w/o likelihood-aligned scores & 33.1 & 25.9 & 22.1 & 13.4 & 88.6 & 70.1 & 50.7 & 43.4 \\
			w/o scheduled margin      & 34.3 & 25.4 & 23.4 & 14.2 & 89.0 & 70.6 & 51.7 & 44.1 \\
			w/ fixed margin           & 34.1 & 25.8 & 23.8 & 14.7 & 87.8 & 70.1 & 51.3 & 43.9 \\
			
			\bottomrule
		\end{tabularx}
		\caption{
			\textbf{Ablation study of key components.}
			The variant ``w/o contrastive objective'' replaces the InfoNCE-style objective with the linear positive-negative score difference induced by Eq.~\eqref{eq:grpo-empirical-discriminative}.
			The variant ``w/o likelihood-aligned scores'' replaces the rollout scores defined by length-normalized sequence log-probabilities with clipped importance sampling ratio surrogate scores from Eq.~\eqref{eq:score-positive-negative}.
			The variant ``w/o scheduled margin'' removes the margin, while ``w/ fixed margin'' replaces the scheduled margin with a fixed one.
		}
		\label{tab:6}
	\end{table*}
	\begin{figure*}[ht]
		\centering
		\includegraphics[width=\linewidth]{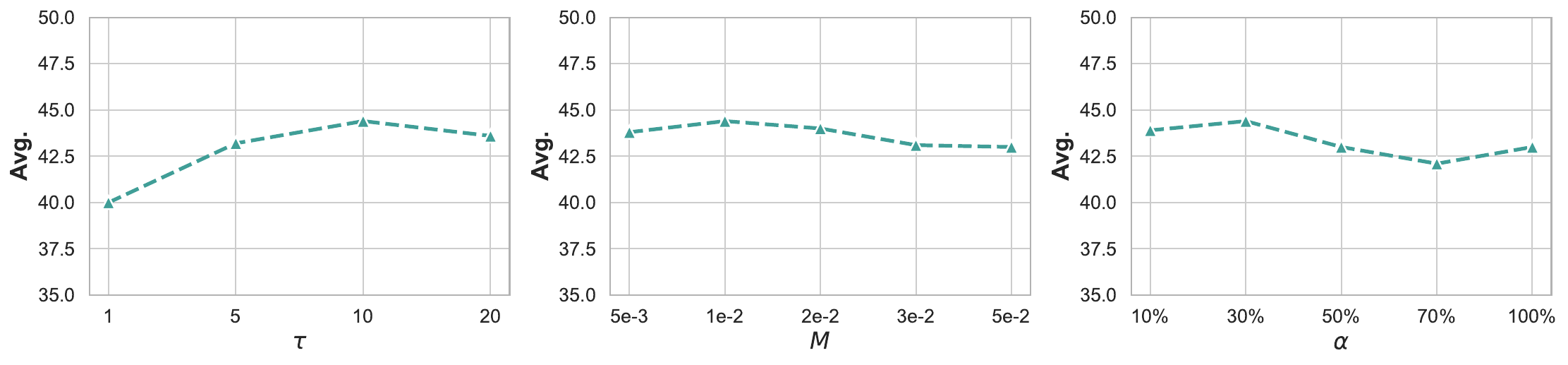}
		\caption{
			\textbf{Parameter study of ConSPO.}
			We vary the contrastive temperature $\tau$, target margin $M$, and margin warmup ratio $\alpha$, and report the average performance on seven reasoning benchmarks.
		}
		\label{fig:para}
	\end{figure*}
	\subsection{Settings}
	\paragraph{Models.} We conduct experiments across diverse model families and scales to validate the generalization of ConSPO, including DeepSeek-R1-Distill-Qwen-1.5B~\citep{guo2025deepseek}, DeepSeek-R1-Distill-Qwen-7B, DeepSeek-R1-Distill-Llama-8B, Qwen3-4B-Base~\citep{yang2025qwen3}, and Llama-3.2-3B-Instruct~\citep{grattafiori2024llama}.
	\paragraph{Training data.} We use the DeepScaleR-Preview-Dataset~\citep{tan2026deepscaler} which comprises approximately 40k mathematical problems drawn from AIME~\citep{zhao2026geometricmean}, AMC~\citep{li2024numinamath}, Omni-MATH~\citep{gao2025omnimath}, and Still~\citep{min2024imitate}. We also use DAPO-Math-17k~\citep{yu2025dapo} to validate the effectiveness of ConSPO on different training data.
	\paragraph{Benchmarks.} We evaluate our models on seven benchmarks, including AIME 2024, AIME 2025, AIME 2026, HMMT 2025~\citep{balunovic_srimatharena_2025}, MATH500~\citep{hendrycks2measuring}, AMC, and OlympiadBench~\citep{he2024olympiadbench}. We report pass@1 on most benchmarks, and avg@32 on AIME, HMMT, and AMC to mitigate the variance introduced by the relatively small size of these datasets~\citep{liu2025ghpo}. More details are provided in Appendix~\ref{Appendix: benchmark}.
	\paragraph{Baselines.} We compare ConSPO against seven recent state-of-the-art RLVR methods, including (1) \textbf{GRPO}; (2) \textbf{DAPO}, which refines the GRPO loss formulation and introduces dynamic sampling and overlong reward shaping; (3) \textbf{Dr.~GRPO}, which removes the standard deviation in advantage estimation and reorganizes loss allocation; (4) \textbf{DisCO}, which revisits GRPO from a discriminative perspective and eliminates its difficulty bias; (5) \textbf{GMPO}, which introduces sequence-level geometric-mean importance sampling; (6) \textbf{CISPO}, which introduces soft clipping in importance sampling; and (7) \textbf{SAPO}, which replaces hard clipping with a smooth, temperature-controlled gate to adaptively control off-policy updates.
	\paragraph{Implementation details.} 
	During training, we set the temperature to 1.0 to ensure sufficient exploration and the learning rate to $2 \times 10^{-6}$. The maximum response length is 8K for DeepSeek-R1-Distill-Qwen and DeepSeek-R1-Distill-Llama, and 4K for all other models. All methods run for 4 epochs, except for Qwen3-4B-Base, which runs for 2 epochs due to its faster convergence.
	Following standard practice~\citep{guo2025deepseek,zhao2026geometricmean}, we discard all-positive and all-negative groups during training, as they do not permit a valid positive-negative contrast.
	We set the contrastive temperature $\tau$ to 10, the target margin $M$ to 0.01, and the margin warmup ratio $\alpha$ to 30\%.
	Evaluations are conducted every 100 steps, with a temperature of 0.6, a top-$p$ of 0.95, and the same maximum response length used during training. The best performance for each method is reported.
	Experiments on the 1.5B, 3B, and 4B models are run on a single compute node, whereas those on the 7B and 8B models are distributed across four compute nodes.
	More details are provided in Appendix~\ref{Appendix: implementation}.
	\subsection{Main Results}
	\paragraph{Performance with reasoning models.}
	As shown in Tables~\ref{tab:1} and~\ref{tab:2}, ConSPO delivers consistent improvements on reasoning models.
	On DeepSeek-R1-Distill-Qwen-1.5B, ConSPO achieves the best average performance of 44.4, outperforming GRPO by 4.4 points and the strongest baseline by 1.6 points.
	Moreover, ConSPO achieves the highest performance across all seven benchmarks in this setting, indicating its generalization across different reasoning tasks.
	The same trend holds for larger reasoning models.
	On DeepSeek-R1-Distill-Qwen-7B and DeepSeek-R1-Distill-Llama-8B, ConSPO improves over the strongest baseline by 2.1 and 1.9 points, respectively.
	These results suggest that the proposed contrastive optimization scheme remains effective across different parameter scales.
	\paragraph{Generalization across backbone models and training datasets.}
	As shown in Tables~\ref{tab:3} and~\ref{tab:4}, ConSPO generalizes well across both base and instruction-tuned models.
	On Qwen3-4B-Base, ConSPO achieves the best average performance of 34.0, outperforming the strongest baseline by 2.1 points.
	Notably, on the challenging HMMT25 benchmark, where the base model obtains only 0.9, ConSPO improves the score to 8.0, suggesting that our method can substantially enhance performance on difficult reasoning tasks where the initial model performs poorly.
	On Llama-3.2-3B-Instruct, whose base reasoning capability is substantially weaker, ConSPO still achieves the highest average performance of 29.2, improving over the strongest baseline by 0.6 points.
	Moreover, Table~\ref{tab:5} shows that ConSPO remains effective when trained on the DAPO-Math-17k dataset, achieving an average performance of 44.1 and outperforming the strongest baseline by 1.9 points.
	These results demonstrate that ConSPO's effectiveness is not tied to a specific backbone or training dataset.
	\subsection{Ablation Study}
	Table~\ref{tab:6} ablates the key components of ConSPO on DeepSeek-R1-Distill-Qwen-1.5B.
	Replacing the InfoNCE-style objective with a linear positive-negative score difference causes the largest performance drop, reducing the average score from 44.4 to 42.7.
	This result shows that group-wise contrastive optimization is central to ConSPO, as it enables credit assignment to adapt to current within-group score comparisons.
	Replacing the length-normalized sequence log-probabilities with clipped importance sampling ratio scores also decreases the average score to 43.4, confirming the importance of aligning rollout scores with the likelihood scale that governs autoregressive generation.
	The scheduled margin further improves performance: removing the margin reduces the average score to 44.1, while using a fixed margin achieves 43.9.
	These results indicate that the contrastive objective, likelihood-aligned sequence scoring, and curriculum-scheduled margin play complementary roles in ConSPO's effectiveness.
	\subsection{Parameter Study}
	Figure~\ref{fig:para} studies the sensitivity of ConSPO to the contrastive temperature $\tau$, target margin $M$, and margin warmup ratio $\alpha$.
	When $\tau$ is too small, the contrastive distribution becomes overly sharp, potentially concentrating the update on a few high-scoring negatives and leading to less stable optimization.
	For the margin schedule, performance drops when either the target margin or the warmup ratio is large, suggesting that ConSPO benefits from a moderate separation requirement introduced at a proper pace.
	
	\section{Conclusion}
	In this paper, we derive an equivalent discriminative reformulation of GRPO as a weighted positive-negative discrimination objective.
	This reformulation exposes two objective-level limitations: \emph{likelihood-misaligned surrogate scores}, where GRPO optimizes clipped ratio-based scores rather than generation likelihoods, and \emph{score-insensitive credit assignment}, where rollout-level credit is insensitive to positive-negative score gaps.
	To address these limitations, we propose ConSPO, which optimizes a group-wise InfoNCE-style objective with likelihood-aligned sequence scores and a curriculum-scheduled margin.
	Extensive experiments across diverse settings validate the effectiveness and generalization of ConSPO.
	
	\section*{Limitations}
	Despite our best efforts, this study has several limitations.
	First, although our experiments cover multiple backbone models, model scales, and training datasets, our empirical validation primarily focuses on models up to the 8B scale.
	While we additionally compare ConSPO and GRPO on a 32B model in Appendix~\ref{app:32b}, more systematic evaluations at this scale and beyond remain an important direction for future work.
	Second, ConSPO is a preliminary exploration of RLVR from a contrastive optimization perspective.
	We instantiate this perspective with an InfoNCE-style group-wise objective, likelihood-aligned sequence scores, and a curriculum-scheduled margin.
	Broader choices of objective formulations and rollout scoring functions remain worth investigating to better understand and generalize contrast-sensitive credit assignment in RLVR.
	
	\section*{Ethical Considerations}
	This work studies reinforcement learning methods to improve LLM reasoning abilities and does not introduce new datasets that contain private, sensitive, or personally identifiable information.
	The potential negative impacts of ConSPO are similar to those associated with general LLM reasoning technologies, including risks from misuse or insufficiently supervised deployment.
	We emphasize the importance of careful evaluation and appropriate safeguards to ensure fair, safe, and responsible use of LLM systems.
	
	\section*{Acknowledgement}
	This work was supported by Qwen Business Unit through Alibaba Research Intern Program.
	
	
	\bibliography{custom}
	\appendix
	\onecolumn
	
	\section{Proofs}
	\label{app:proofs}
	\subsection{Proof of Proposition~\ref{proposition:1} and Lemma~\ref{lemma:grpo-gradient}}
	\label{app:grpo-reformulation}
	
	We first state two auxiliary lemmas.
	
	\begin{lemma}
		\label{lemma:advantage-reweighting}
		For any query $q$ with $\sigma_q>0$, define
		\[
		A^+(o\mid q)=\max(A(o\mid q),0),
		\qquad
		A^-(o\mid q)=\max(-A(o\mid q),0).
		\]
		Let
		\[
		W_+(q)=\mathbb E_{o\sim\pi_{\rm old}(\cdot\mid q)}[A^+(o\mid q)],
		\qquad
		W_-(q)=\mathbb E_{o\sim\pi_{\rm old}(\cdot\mid q)}[A^-(o\mid q)].
		\]
		Then $W_+(q)=W_-(q)$. Denoting the common value by $W(q)$, we have
		\[
		W(q)
		=
		\frac{1}{2}
		\mathbb E_{\bar o\sim\pi_{\rm old}(\cdot\mid q)}
		\big[
		|A(\bar o\mid q)|
		\big]
		>0.
		\]
		Moreover,
		\[
		\widetilde{\pi}_q^+(do)
		=
		\frac{A^+(o\mid q)}{W(q)}\,\pi_{\rm old}(do\mid q),
		\qquad
		\widetilde{\pi}_q^-(do)
		=
		\frac{A^-(o\mid q)}{W(q)}\,\pi_{\rm old}(do\mid q)
		\]
		are valid probability distributions.
	\end{lemma}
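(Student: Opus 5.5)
The plan is to exploit the fact that the expected advantage $A(o\mid q)$ has zero mean under $\pi_{\rm old}(\cdot\mid q)$, and to read off everything else from the elementary identities $A = A^+ - A^-$ and $|A| = A^+ + A^-$. I will carry out three steps in order.

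\textbf{Step 1: the two weights coincide.} By Eq.~\eqref{eq:expected-advantage} and the definition of $\mu_q$ as the mean of $r(o,q)$ under $\pi_{\rm old}(\cdot\mid q)$,
\[
\mathbb E_{o\sim\pi_{\rm old}(\cdot\mid q)}[A(o\mid q)]=\frac{\mathbb E[r(o,q)]-\mu_q}{\sigma_q}=0 .
\]
Here $\sigma_q>0$ guarantees that $A$ is well defined. Since $A=A^+-A^-$ pointwise, taking expectations gives $W_+(q)-W_-(q)=0$. Both expectations are finite: $|A|$ is square-integrable, with $\mathbb E[A^2]=1$, so $\mathbb E|A|\le 1$ by Cauchy--Schwarz. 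Next, $|A|=A^++A^-$ gives
\[
\mathbb E|A| = W_++W_- = 2W(q),
\]
which is the claimed formula for $W(q)$.

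\textbf{Step 2: strict positivity.} This is the only step needing a small argument. If $W(q)=0$, then $\mathbb E|A|=0$, so $A=0$ almost surely under $\pi_{\rm old}(\cdot\mid q)$. That means $r(o,q)=\mu_q$ almost surely, and hence $\sigma_q^2=\mathrm{Var}(r)=0$. This contradicts $\sigma_q>0$, so $W(q)>0$.

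\textbf{Step 3: the reweighted measures are probability distributions.} The densities $A^\pm/W(q)$ with respect to $\pi_{\rm old}(\cdot\mid q)$ are nonnegative and measurable, being positive and negative parts of a measurable function divided by a positive constant. Their integrals equal $W_\pm(q)/W(q)=1$ by Step 1. So $\widetilde\pi_q^\pm$ are probability measures; in fact they are absolutely continuous with respect to $\pi_{\rm old}$.

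I expect no genuine obstacle. The only points requiring care are integrability, which is handled by the finite variance, and the positivity argument in Step 2, which is where the hypothesis $\sigma_q>0$ is used.
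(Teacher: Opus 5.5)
Your proposal is correct and follows the paper's proof essentially step for step. It uses the zero mean of $A$, the identities $A=A^+-A^-$ and $|A|=A^++A^-$, strict positivity from $\sigma_q>0$ ruling out $A=0$ almost surely, and normalization of $\widetilde{\pi}_q^{\pm}$ by $W_\pm(q)/W(q)=1$; your integrability remark via $\mathbb E[A^2]=1$ is a harmless refinement that the paper leaves implicit.
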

	
	\begin{proof}
		By the definition of $A(o\mid q)$,
		\begin{align*}
			\mathbb E_{o\sim\pi_{\rm old}(\cdot\mid q)}[A(o\mid q)]
			&=
			\frac{
				\mathbb E_{o\sim\pi_{\rm old}(\cdot\mid q)}[r(o,q)]-\mu_q
			}{\sigma_q}
			=0.
		\end{align*}
		Since $A=A^+-A^-$, it follows that $W_+(q)=W_-(q)$. Since $|A|=A^++A^-$,
		\[
		\mathbb E_{\bar o\sim\pi_{\rm old}(\cdot\mid q)}
		\big[
		|A(\bar o\mid q)|
		\big]
		=
		W_+(q)+W_-(q)
		=
		2W(q).
		\]
		The condition $\sigma_q>0$ implies that $A(o\mid q)$ is not almost surely zero; combined with $\mathbb E[A(o\mid q)]=0$, this gives $W(q)>0$.
		
		The reweighted measures are nonnegative, and
		\[
		\int \widetilde{\pi}_q^+(do)
		=
		\frac{
			\mathbb E_{o\sim\pi_{\rm old}(\cdot\mid q)}[A^+(o\mid q)]
		}{W(q)}
		=1,
		\qquad
		\int \widetilde{\pi}_q^-(do)
		=
		\frac{
			\mathbb E_{o\sim\pi_{\rm old}(\cdot\mid q)}[A^-(o\mid q)]
		}{W(q)}
		=1.
		\]
	\end{proof}
	
	\begin{lemma}
		\label{lemma:binary-specialization}
		Assume binary rewards $r(o,q)\in\{0,1\}$ and define
		\[
		p(q)=\mathbb E_{o\sim\pi_{\rm old}(\cdot\mid q)}[r(o,q)].
		\]
		For any query with $0<p(q)<1$,
		\[
		W(q)=\sqrt{p(q)(1-p(q))},
		\qquad
		\widetilde{\pi}_q^+=\pi_{\rm old}^+(\cdot\mid q),
		\qquad
		\widetilde{\pi}_q^-=\pi_{\rm old}^-(\cdot\mid q),
		\]
		where $\pi_{\rm old}^+(\cdot\mid q)$ and $\pi_{\rm old}^-(\cdot\mid q)$ denote $\pi_{\rm old}$ conditioned on $r(o,q)=1$ and $r(o,q)=0$, respectively.
	\end{lemma}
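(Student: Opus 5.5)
Under binary rewards the whole statement reduces to explicit computation, since $r(o,q)\in\{0,1\}$ makes $A(o\mid q)$ take only two values. The plan is to compute the moments of $r$, write down the two values of $A$, and then read off both $W(q)$ and the reweighted distributions from Lemma~\ref{lemma:advantage-reweighting}.

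\textbf{Moments and advantage values.} Since $r^2=r$, we have $\mu_q=p(q)$ and
\[
\sigma_q^2=\mathbb E[r^2]-\mu_q^2=p(q)(1-p(q)).
\]
This is positive exactly when $0<p(q)<1$, so the hypothesis $\sigma_q>0$ of Lemma~\ref{lemma:advantage-reweighting} holds and that lemma applies. Writing $p=p(q)$, the advantage in Eq.~\eqref{eq:expected-advantage} is
\[
A(o\mid q)=\frac{1-p}{\sqrt{p(1-p)}}=\sqrt{\tfrac{1-p}{p}}\quad\text{on } \{r=1\},
\qquad
A(o\mid q)=-\frac{p}{\sqrt{p(1-p)}}=-\sqrt{\tfrac{p}{1-p}}\quad\text{on } \{r=0\}.
\]
Consequently $A^+(o\mid q)=\sqrt{(1-p)/p}\,\mathbf 1[r(o,q)=1]$ and $A^-(o\mid q)=\sqrt{p/(1-p)}\,\mathbf 1[r(o,q)=0]$.

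\textbf{Computing $W(q)$.} Taking the expectation of $A^+$ under $\pi_{\rm old}(\cdot\mid q)$ gives
\[
W_+(q)=p\sqrt{\tfrac{1-p}{p}}=\sqrt{p(1-p)}.
\]
As a consistency check, $W_-(q)=(1-p)\sqrt{p/(1-p)}=\sqrt{p(1-p)}$, matching $W_+=W_-$ from Lemma~\ref{lemma:advantage-reweighting}. Hence $W(q)=\sqrt{p(q)(1-p(q))}$.

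\textbf{Identifying the reweighted distributions.} Substituting into the definitions from Lemma~\ref{lemma:advantage-reweighting}:
\[
\widetilde\pi_q^+(do)=\frac{\sqrt{(1-p)/p}\;\mathbf 1[r=1]}{\sqrt{p(1-p)}}\,\pi_{\rm old}(do\mid q)=\frac{\mathbf 1[r(o,q)=1]}{p}\,\pi_{\rm old}(do\mid q).
\]
This is precisely $\pi_{\rm old}$ conditioned on the event $\{r=1\}$, which has probability $p>0$, so $\widetilde\pi_q^+=\pi_{\rm old}^+(\cdot\mid q)$. By the same computation, $\widetilde\pi_q^-(do)=\mathbf 1[r(o,q)=0]\,\pi_{\rm old}(do\mid q)/(1-p)=\pi_{\rm old}^-(do\mid q)$.

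No step is genuinely hard. The only point needing care is the measure-theoretic reading of ``conditioned on'': we identify it with the normalized restriction of $\pi_{\rm old}$ to the event, which is well defined because $0<p<1$ ensures both events have positive probability.
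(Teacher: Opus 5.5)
Your proposal is correct and follows the paper's proof essentially step for step: compute $\mu_q=p(q)$ and $\sigma_q=\sqrt{p(q)(1-p(q))}$, write down the two values of $A(o\mid q)$, evaluate $W(q)$ as the expectation of $A^+$, and substitute into the reweighting definitions to obtain the densities $\mathbf 1\{r(o,q)=1\}/p(q)$ and $\mathbf 1\{r(o,q)=0\}/(1-p(q))$ with respect to $\pi_{\rm old}(\cdot\mid q)$. You also check explicitly that $\sigma_q>0$, so that Lemma~\ref{lemma:advantage-reweighting} applies, and verify $W_-=W_+$; the paper leaves both points implicit.
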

	
	\begin{proof}
		Under binary rewards,
		\[
		\mu_q=p(q),
		\qquad
		\sigma_q=\sqrt{p(q)(1-p(q))},
		\qquad
		A(o\mid q)
		=
		\begin{cases}
			\sqrt{\dfrac{1-p(q)}{p(q)}}, & r(o,q)=1,\\[8pt]
			-\sqrt{\dfrac{p(q)}{1-p(q)}}, & r(o,q)=0.
		\end{cases}
		\]
		Therefore,
		\[
		W(q)
		=
		\mathbb E_{o\sim\pi_{\rm old}(\cdot\mid q)}[A^+(o\mid q)]
		=
		p(q)\sqrt{\frac{1-p(q)}{p(q)}}
		=
		\sqrt{p(q)(1-p(q))}.
		\]
		Substituting this value into the definitions of $\widetilde{\pi}_q^+$ and $\widetilde{\pi}_q^-$ gives
		\begin{align*}
			\widetilde{\pi}_q^+(do)
			&=
			\frac{A^+(o\mid q)}{W(q)}\,\pi_{\rm old}(do\mid q)
			=
			\frac{\mathbf 1\{r(o,q)=1\}}{p(q)}\,\pi_{\rm old}(do\mid q)
			=
			\pi_{\rm old}^+(do\mid q), \\
			\widetilde{\pi}_q^-(do)
			&=
			\frac{A^-(o\mid q)}{W(q)}\,\pi_{\rm old}(do\mid q)
			=
			\frac{\mathbf 1\{r(o,q)=0\}}{1-p(q)}\,\pi_{\rm old}(do\mid q)
			=
			\pi_{\rm old}^-(do\mid q).
		\end{align*}
	\end{proof}
	
	\begin{proof}[Proof of Proposition~\ref{proposition:1}]
		Let
		\[
		\rho_t(\theta)
		=
		\frac{\pi_\theta(o_t\mid q,o_{<t})}
		{\pi_{\rm old}(o_t\mid q,o_{<t})}.
		\]
		Removing the KL regularizer from Eq.~\eqref{eq:grpo-objective} gives
		\[
		\mathcal{J}_{\rm GRPO}^{\rm clip}(\theta)
		=
		\mathbb E_q
		\mathbb E_{o\sim\pi_{\rm old}(\cdot\mid q)}
		\left[
		\frac{1}{|o|}
		\sum_{t=1}^{|o|}
		f\left(\rho_t(\theta),A(o\mid q)\right)
		\right].
		\]
		For the clipping function in Eq.~\eqref{eq:grpo-clip},
		\[
		f(\rho,A(o\mid q))
		=
		A^+(o\mid q)\min(\rho,1+\epsilon)
		-
		A^-(o\mid q)\max(\rho,1-\epsilon).
		\]
		Averaging over tokens and using Eq.~\eqref{eq:score-positive-negative}, we obtain
		\begin{align}
			\mathcal{J}_{\rm GRPO}^{\rm clip}(\theta)
			&=
			\mathbb E_q
			\mathbb E_{o\sim\pi_{\rm old}(\cdot\mid q)}
			\left[
			A^+(o\mid q)s_\theta^+(o,q)
			-
			A^-(o\mid q)s_\theta^-(o,q)
			\right] \notag \\
			&=
			\mathbb E_q
			W(q)
			\left(
			\mathbb E_{o\sim\widetilde{\pi}_q^+}
			[s_\theta^+(o,q)]
			-
			\mathbb E_{o'\sim\widetilde{\pi}_q^-}
			[s_\theta^-(o',q)]
			\right) \notag \\
			&=
			\mathbb E_q
			W(q)
			\,
			\mathbb E_{o\sim\widetilde{\pi}_q^+,\,o'\sim\widetilde{\pi}_q^-}
			\left[
			s_\theta^+(o,q)-s_\theta^-(o',q)
			\right].
			\label{eq:appendix-grpo-weighted-discrimination}
		\end{align}
		By Lemma~\ref{lemma:advantage-reweighting},
		\[
		W(q)
		=
		\frac{1}{2}
		\mathbb E_{\bar o\sim\pi_{\rm old}(\cdot\mid q)}
		\big[
		|A(\bar o\mid q)|
		\big].
		\]
		Substituting this identity into Eq.~\eqref{eq:appendix-grpo-weighted-discrimination} yields
		\begin{align*}
			\mathcal{J}_{\rm GRPO}^{\rm clip}(\theta)
			=
			\mathbb E_q
			\left[
			\frac{1}{2}
			\mathbb E_{\bar o\sim\pi_{\rm old}(\cdot\mid q)}
			\big[
			|A(\bar o\mid q)|
			\big]
			\,
			\mathbb E_{o\sim\widetilde{\pi}_q^+,\,o'\sim\widetilde{\pi}_q^-}
			\big[
			s_\theta^+(o,q)-s_\theta^-(o',q)
			\big]
			\right],
		\end{align*}
		which is Eq.~\eqref{eq:grpo-general-reformulation}.
		
		For binary rewards with $0<p(q)<1$, applying Lemma~\ref{lemma:binary-specialization} to Eq.~\eqref{eq:appendix-grpo-weighted-discrimination} gives
		\begin{align*}
			\mathcal{J}_{\rm GRPO}^{\rm clip}(\theta)
			=
			\mathbb E_q
			\sqrt{p(q)(1-p(q))}
			\,
			\mathbb E_{o\sim\pi_{\rm old}^+(\cdot\mid q),\,o'\sim\pi_{\rm old}^-(\cdot\mid q)}
			\big[
			s_\theta^+(o,q)-s_\theta^-(o',q)
			\big],
		\end{align*}
		which is Eq.~\eqref{eq:grpo-binary-reformulation}.
	\end{proof}
	
	\begin{proof}[Proof of Lemma~\ref{lemma:grpo-gradient}]
		By Eq.~\eqref{eq:grpo-empirical-discriminative},
		\[
		\widehat{\mathcal J}_{\rm GRPO}^{\rm clip}(q)
		=
		\sqrt{\widehat p_q(1-\widehat p_q)}
		\left(
		\frac{1}{N^+}\sum_{i=1}^{N^+}s_\theta^+(o_i^+,q)
		-
		\frac{1}{N^-}\sum_{j=1}^{N^-}s_\theta^-(o_j^-,q)
		\right).
		\]
		Differentiating with respect to $s_\theta^+(o_i^+,q)$ and $s_\theta^-(o_j^-,q)$ gives
		\[
		\frac{\partial \widehat{\mathcal J}_{\rm GRPO}^{\rm clip}(q)}
		{\partial s_\theta^+(o_i^+,q)}
		=
		\frac{1}{N^+}
		\sqrt{\widehat p_q(1-\widehat p_q)},
		\qquad
		\frac{\partial \widehat{\mathcal J}_{\rm GRPO}^{\rm clip}(q)}
		{\partial s_\theta^-(o_j^-,q)}
		=
		-
		\frac{1}{N^-}
		\sqrt{\widehat p_q(1-\widehat p_q)}.
		\]
	\end{proof}
	
	\subsection{Proof of Lemma~\ref{lemma:conspo-gradient}}
	\label{app:conspo-gradient}
	
	\begin{proof}
		For brevity, write
		\[
		s_i^+=s_\theta(o_i^+,q),
		\qquad
		s_j^-=s_\theta(o_j^-,q),
		\qquad
		Z_i=\exp(s_i^+/\tau)+\sum_{k=1}^{N^-}\exp(s_k^-/\tau).
		\]
		Then Eq.~\eqref{eq:conspo-infonce} becomes
		\[
		\widehat{\mathcal J}_{\rm NCE}(q)
		=
		\frac{1}{N^+}
		\sum_{i=1}^{N^+}
		\tau
		\left(
		\frac{s_i^+}{\tau}
		-
		\log Z_i
		\right),
		\]
		and the contrastive probabilities in Eq.~\eqref{eq:conspo-prob} are
		\[
		P_i^+
		=
		\frac{\exp(s_i^+/\tau)}{Z_i},
		\qquad
		P_{ij}^-
		=
		\frac{\exp(s_j^-/\tau)}{Z_i}.
		\]
		
		For a positive rollout score $s_i^+$, only the $i$-th summand depends on it. Hence
		\begin{align*}
			\frac{\partial \widehat{\mathcal J}_{\rm NCE}(q)}
			{\partial s_i^+}
			&=
			\frac{1}{N^+}
			\tau
			\left(
			\frac{1}{\tau}
			-
			\frac{1}{Z_i}
			\frac{\partial Z_i}{\partial s_i^+}
			\right)
			=
			\frac{1}{N^+}
			\tau
			\left(
			\frac{1}{\tau}
			-
			\frac{1}{Z_i}
			\frac{\exp(s_i^+/\tau)}{\tau}
			\right)
			=
			\frac{1}{N^+}(1-P_i^+).
		\end{align*}
		For a negative rollout score $s_j^-$, all summands depend on it through their denominators, giving
		\begin{align*}
			\frac{\partial \widehat{\mathcal J}_{\rm NCE}(q)}
			{\partial s_j^-}
			&=
			-
			\frac{1}{N^+}
			\sum_{i=1}^{N^+}
			\tau
			\frac{1}{Z_i}
			\frac{\partial Z_i}{\partial s_j^-}
			=
			-
			\frac{1}{N^+}
			\sum_{i=1}^{N^+}
			\tau
			\frac{1}{Z_i}
			\frac{\exp(s_j^-/\tau)}{\tau}
			=
			-
			\frac{1}{N^+}
			\sum_{i=1}^{N^+}
			P_{ij}^-.
		\end{align*}
		This gives Eq.~\eqref{eq:conspo-gradient}.
		
		Using $P_i^+ + \sum_{j=1}^{N^-}P_{ij}^- = 1$, we obtain
		\begin{align*}
			\sum_{j=1}^{N^-}
			\frac{\partial \widehat{\mathcal J}_{\rm NCE}(q)}
			{\partial s_j^-}
			&=
			-
			\frac{1}{N^+}
			\sum_{i=1}^{N^+}
			\sum_{j=1}^{N^-}
			P_{ij}^-
			=
			-
			\frac{1}{N^+}
			\sum_{i=1}^{N^+}
			(1-P_i^+).
		\end{align*}
		Moreover,
		\begin{align*}
			\sum_{i=1}^{N^+}
			\frac{\partial \widehat{\mathcal J}_{\rm NCE}(q)}
			{\partial s_i^+}
			+
			\sum_{j=1}^{N^-}
			\frac{\partial \widehat{\mathcal J}_{\rm NCE}(q)}
			{\partial s_j^-}
			&=
			\frac{1}{N^+}
			\sum_{i=1}^{N^+}
			(1-P_i^+)
			-
			\frac{1}{N^+}
			\sum_{i=1}^{N^+}
			(1-P_i^+)
			=
			0.
		\end{align*}
		This gives Eq.~\eqref{eq:conspo-gradient-aggregate}.
		
		Finally, for any two negative rollouts $o_j^-$ and $o_k^-$,
		\begin{align*}
			P_{ij}^-
			&=
			\frac{\exp(s_j^-/\tau)}{Z_i}
			=
			\exp\left(
			\frac{s_j^- - s_k^-}{\tau}
			\right)
			\frac{\exp(s_k^-/\tau)}{Z_i}
			=
			\exp\left(
			\frac{s_j^- - s_k^-}{\tau}
			\right)
			P_{ik}^-.
		\end{align*}
		Summing over $i$ yields
		\[
		\sum_{i=1}^{N^+}P_{ij}^-
		=
		\exp\left(
		\frac{s_j^- - s_k^-}{\tau}
		\right)
		\sum_{i=1}^{N^+}P_{ik}^-.
		\]
		Combining this identity with Eq.~\eqref{eq:conspo-gradient} gives
		\[
		\frac{
			\left|
			\partial \widehat{\mathcal J}_{\rm NCE}(q) / \partial s_j^-
			\right|
		}{
			\left|
			\partial \widehat{\mathcal J}_{\rm NCE}(q) / \partial s_k^-
			\right|
		}
		=
		\frac{\sum_{i=1}^{N^+}P_{ij}^-}{\sum_{i=1}^{N^+}P_{ik}^-}
		=
		\exp\left(
		\frac{s_j^- - s_k^-}{\tau}
		\right).
		\]
		This gives Eq.~\eqref{eq:conspo-hard-negative}.
	\end{proof}
	\twocolumn
	\section{Detailed Related Work}
	\label{app:related}
	\paragraph{RLVR for LLM reasoning.}
	GRPO~\citep{guo2025deepseek} circumvents the reliance on a parameterized value model by estimating group-relative advantages directly from outcome-level rewards, maintaining robust performance across diverse reasoning tasks.
	Building on this paradigm, subsequent efforts have advanced GRPO toward better performance, more stable training, and more efficient reasoning.
	Dr.~GRPO~\citep{liu2025understanding} removes the standard deviation in advantage estimation to mitigate difficulty bias, and adjusts loss allocation to reduce length bias.
	DAPO~\citep{yu2025dapo} introduces dynamic sampling to reduce high-variance gradient updates, and employs length-based reward shaping for finer-grained length control.
	GMPO~\citep{zhao2026geometricmean} addresses outlier effects in asynchronous training by replacing token-level clipping with sequence-level geometric-mean clipping.
	CISPO~\citep{chen2025minimax} introduces soft clipping to preserve learning signals discarded by hard clipping, while SAPO~\citep{gao2025soft} applies asymmetric, temperature-controlled clipping for finer-grained and stable updates.
	HAPO~\citep{huang2026hapo} and ShorterBetter~\citep{yi2025shorterbetter} promote concise reasoning by selecting shorter trajectories from either historical or current generations.
	While DisCO~\citep{li2025disco} reinterprets GRPO through a discriminative lens under binary rewards and explores alternative scoring functions, it does not address the score-insensitive credit assignment induced by the GRPO objective.
	In contrast, ConSPO optimizes a group-wise InfoNCE-style objective for online RLVR, using likelihood-aligned sequence scores and a curriculum-scheduled margin to enable contrast-sensitive credit assignment within rollout groups.
	\paragraph{Contrastive Objectives for Policy Alignment.}
	Contrastive learning trains models by distinguishing positive samples from negative distractors, with InfoNCE-style objectives serving as a standard formulation~\citep{oord2018representation}.
	Foundational methods improve contrastive learning by enlarging the negative set, for example, through large-batch sampling in SimCLR~\citep{chen2020simple} or momentum queues in MoCo~\citep{he2020momentum}.
	Inspired by contrastive learning principles, recent policy alignment methods formulate preference optimization as a task of discriminating between preferred and dispreferred responses.
	For example, DPO~\citep{rafailov2023direct} and CPO~\citep{xu2024contrastive} optimize policy likelihoods over preference pairs without explicit reward modeling, while SimPO~\citep{meng2024simpo} further uses length-normalized sequence log-probabilities as an implicit ranking score.
	Unlike these offline preference optimization methods, ConSPO transfers the contrastive principle to online RLVR, where positive-negative comparisons are constructed from verifiable rewards on policy-generated rollouts rather than from static pairwise preference data.
	\begin{table}[t]
		\centering
		\small
		\setlength{\tabcolsep}{10pt}
		\renewcommand{\arraystretch}{1.08}
		\begin{tabular}{@{}>{\centering\arraybackslash\bfseries}p{0.27\linewidth} l c@{}}
			\toprule
			\textbf{Model} & \textbf{Config} & \textbf{Value} \\
			\midrule
			\multirow{10}{*}{\parbox{3cm}{\centering R1-Distill-Qwen\\\&\\R1-Distill-Llama}} 
			& max prompt length & 1024 \\
			& max response length & 8192 \\
			& rollout temperature & 1.0 \\
			& learning rate & $2 \times 10^{-6}$ \\
			& train batch size & 256 \\
			& number of rollouts & 8 \\
			& training epochs & 4 \\
			& contrastive temperature $\tau$ & 10 \\
			& maximum margin $M$ & 0.01 \\
			& margin warmup ratio $\alpha$ & 30\% \\
			\bottomrule
		\end{tabular}
		\caption{\textbf{Training configurations for the main experiments.}}
		\label{tab:implementation_config}
	\end{table}
	\begin{table}[t]
		\centering
		\small
		\setlength{\tabcolsep}{12pt}
		\renewcommand{\arraystretch}{1.08}
		\begin{tabular}{@{}>{\centering\arraybackslash\bfseries}p{0.27\linewidth} l c@{}}
			\toprule
			\textbf{Model} & \textbf{Config} & \textbf{Value} \\
			\midrule
			\multirow{4}{*}{\parbox{3cm}{\centering R1-Distill-Qwen\\\&\\R1-Distill-Llama}} 
			& max response length & 8192 \\
			& rollout temperature & 0.6 \\
			& top-$p$ & 0.95 \\
			& evaluation interval & 100 steps \\
			\bottomrule
		\end{tabular}
		\caption{\textbf{Evaluation configurations for the main experiments.}}
		\label{tab:evaluation_config}
	\end{table}
	\begin{table*}[t]
		\centering
		\small
		\setlength{\tabcolsep}{3.5pt}
		\renewcommand{\arraystretch}{1.0}
		\begin{tabularx}{\textwidth}{l*{8}{>{\centering\arraybackslash}X}}
			\toprule
			\textbf{Method}
			& \textbf{AIME24}
			& \textbf{AIME25}
			& \textbf{AIME26}
			& \textbf{HMMT25}
			& \textbf{MATH500}
			& \textbf{AMC}
			& \mbox{\textbf{O-Bench}}
			& \textbf{Avg.} \\
			\midrule
			GRPO
			& 60.7 & 43.1 & 49.6 & 26.0
			& 95.0 & 85.4 & 61.8 & 60.2 \\
			\midrule
			\rowcolor{green!15}
			\textbf{ConSPO (Ours)}
			& \textbf{65.0}
			& \textbf{46.3}
			& \textbf{52.5}
			& \textbf{27.6}
			& \textbf{95.8}
			& \textbf{86.0}
			& \textbf{67.6}
			& \textbf{63.0} \\
			\bottomrule
		\end{tabularx}
		\caption{
			\textbf{Additional results on DeepSeek-R1-Distill-Qwen-32B.}
			We report avg@32 on AIME, HMMT, and AMC, and pass@1 on the remaining benchmarks.
			O-Bench is short for OlympiadBench.
		}
		\label{tab:32b}
	\end{table*}
	\section{Detailed Experimental Settings}
	\subsection{Benchmarks}
	\label{Appendix: benchmark}
	We evaluate the mathematical reasoning ability of the trained models on seven benchmarks that are commonly used in recent RLVR studies~\citep{tan2026deepscaler,yan2026learning,zhang2026adhintadaptivehintsdifficulty,zhang2025detecting}.
	
	\begin{itemize}[leftmargin=*]
		\item \textbf{AIME24--26} --- Problems from the 2024, 2025, and 2026 American Invitational Mathematics Examination. AIME consists of challenging high-school competition problems with integer-valued answers, requiring precise multi-step reasoning rather than multiple-choice selection.
		
		\item \textbf{HMMT25} --- Problems from the February 2025 Harvard--MIT Mathematics Tournament. The benchmark covers competition-level problems in algebra and number theory, combinatorics, and geometry, as well as team-based rounds.
		
		\item \textbf{MATH500} --- A 500-problem evaluation subset of the MATH benchmark. It contains competition-style problems across seven subjects, including algebra, geometry, counting and probability, number theory, and precalculus.
		
		\item \textbf{AMC} --- Problems from the American Mathematics Competitions, covering a broad range of high-school mathematics topics. Although AMC is originally multiple-choice, the problems still require careful reasoning and symbolic manipulation.
		
		\item \textbf{OlympiadBench} --- An Olympiad-level benchmark containing challenging mathematics and physics problems. It evaluates advanced scientific reasoning on problems collected from Olympiad-style competitions and related examinations.
	\end{itemize}
	
	Following prior work~\citep{liu2025ghpo}, we report avg@32 on AIME, HMMT, and AMC to reduce the variance caused by their relatively small test sizes, and report pass@1 on the remaining benchmarks.
	\subsection{Implementation Details}
	\label{Appendix: implementation}
	The training and evaluation configurations for the R1-Distill models are summarized in Tables~\ref{tab:implementation_config} and~\ref{tab:evaluation_config}, respectively.
	For the experiments on Qwen3-4B-Base and Llama-3.2-3B-Instruct, we use the same settings as the R1-Distill models, except that the maximum response length is set to 4096 during both training and evaluation. In addition, Qwen3-4B-Base is trained for 2 epochs due to its faster convergence, while the other models are trained for 4 epochs.
	\section{Additional Experimental Results}
	\label{app:32b}
	To further evaluate the scalability of ConSPO, we conduct additional experiments on DeepSeek-R1-Distill-Qwen-32B. We use the same training data and hyperparameter settings as in the main experiments, and train both GRPO and ConSPO for two epochs. As shown in Table~\ref{tab:32b}, ConSPO outperforms GRPO across all seven benchmarks, improving the average score from 60.2 to 63.0. These results demonstrate that the effectiveness of ConSPO extends to substantially larger model scales.
	\begin{figure}[t]
		\centering
		\includegraphics[width=\linewidth]{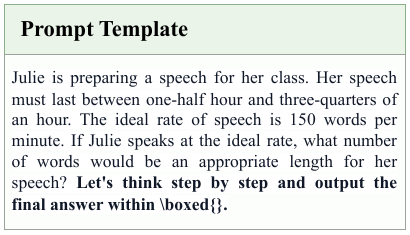}
		\caption{\textbf{Prompt template for rollout generation and evaluation.}}
		\label{fig:prompt_template}
	\end{figure}
	\section{Prompt Template}
	We use a unified prompt template for both rollout generation and evaluation across all methods.
	As shown in Figure~\ref{fig:prompt_template}, the prompt asks the model to solve the problem step by step and place the final answer within \verb|\boxed{}|, which facilitates answer extraction and rule-based verification.
	
\end{document}